\documentclass{article} 
\usepackage{iclr2021_conference,times}

\usepackage{hyperref}
\usepackage{url}
\usepackage{lmodern}
\usepackage{amsmath}
\usepackage{amsthm}
\usepackage{amssymb}
\usepackage{amsfonts}
\usepackage{bbm}
\usepackage{graphicx}
\usepackage{subfig}
\usepackage{enumitem}

\def\eg{\emph{e.g.}}

\def\Hcal{{\mathcal H}}

\def\Ncal{{\mathcal N}}

\def\kmone{{k\text{--}1}}

\def\dmone{{d\text{--}1}}

\def\NTK{{\text{NTK}}}
\def\RF{{\text{RF}}}

\def\R{{\mathbb R}}

\def\Z{{\mathbb Z}}
\def\Sbb{{\mathbb S}}

\newcommand*\pFq[2]{{}_{#1}F_{#2}}

\DeclareMathOperator{\E}{\mathbb{E}}
\DeclareMathOperator{\1}{\mathbbm{1}}

\newtheorem{theorem}{Theorem}

\newtheorem{lemma}[theorem]{Lemma}
\newtheorem{corollary}[theorem]{Corollary}

\title{Deep Equals Shallow for ReLU Networks \\ in Kernel Regimes}

\author{Alberto Bietti\thanks{Work done while at Inria.} \\
NYU\thanks{Center for Data Science, New York University. New York, USA.} \\
\texttt{alberto.bietti@nyu.edu}
\And
Francis Bach \\
Inria\thanks{Inria - Département d’Informatique de l’École Normale Supérieure.
PSL Research University.
Paris, France.} \\
\texttt{francis.bach@inria.fr} \\
}

\iclrfinalcopy 
\begin{document}

\maketitle

\begin{abstract}
Deep networks are often considered to be more expressive than shallow ones in terms of approximation.
Indeed, certain functions can be approximated by deep networks provably more efficiently than by shallow ones, however, no tractable algorithms are known for learning such deep models.
Separately, a recent line of work has shown that deep networks trained with gradient descent may behave like (tractable) kernel methods in a certain over-parameterized regime, where the kernel is determined by the architecture and initialization, and this paper focuses on approximation for such kernels.
We show that for ReLU activations, the kernels derived from deep fully-connected networks have essentially the same approximation properties as their ``shallow'' two-layer counterpart, namely the same eigenvalue decay for the corresponding integral operator. This highlights the limitations of the kernel framework for understanding the benefits of such deep architectures.
Our main theoretical result relies on characterizing such eigenvalue decays through differentiability properties of the kernel function, which also easily applies to the study of other kernels defined on the~sphere.
\end{abstract}

\section{Introduction}
\label{sec:introduction}

The question of which functions can be well approximated by neural networks is crucial for understanding when these models are successful, and has always been at the heart of the theoretical study of neural networks~\citep[\eg,][]{hornik1989multilayer,pinkus1999approximation}.
While early works have mostly focused on shallow networks with only two layers, more recent works have shown benefits of deep networks for approximating certain classes of functions~\citep{eldan2016power,mhaskar2016deep,telgarsky2016benefits,daniely2017depth,yarotsky2017error,schmidt2020nonparametric}.
Unfortunately, many of these approaches rely on constructions that are not currently known to be learnable using efficient algorithms.

A separate line of work has considered over-parameterized networks with random neurons~\citep{neal1996bayesian}, which also display universal approximation properties while additionally providing efficient algorithms based on kernel methods or their approximations such as random features~\citep{rahimi2007,bach2017equivalence}.
Many recent results on gradient-based optimization of certain over-parameterized networks have been shown to be equivalent to kernel methods with an architecture-specific kernel called the \emph{neural tangent kernel} (NTK) and thus also fall in this category~\citep[\eg,][]{jacot2018neural,li2018learning,allen2019convergence,du2019bgradient,du2019agradient,zou2019stochastic}.
This regime has been coined \emph{lazy}~\citep{chizat2018note}, as it does not capture the common phenomenon where weights move significantly away from random initialization and thus may not provide a satisfying model for learning adaptive representations, in contrast to other settings such as the \emph{mean field} or \emph{active} regime, which captures complex training dynamics where weights may move in a non-trivial manner and adapt to the data~\citep[\eg,][]{chizat2018global,mei2018mean}.
Nevertheless, one benefit compared to the mean field regime is that the kernel approach easily extends to deep architectures, leading to compositional kernels similar to the ones of~\citet{cho2009kernel,daniely2016toward}.
Our goal in this paper is to study the role of depth in determining approximation properties for such kernels, with a focus on fully-connected deep ReLU networks.

Our approximation results rely on the study of eigenvalue decays of integral operators associated to the obtained dot-product kernels on the sphere, which are diagonalized in the basis of spherical harmonics.
This provides a characterization of the functions in the corresponding reproducing kernel Hilbert space (RKHS) in terms of their smoothness, and leads to convergence rates for non-parametric regression when the data are uniformly distributed on the sphere.
We show that for ReLU networks, the eigenvalue decays for the corresponding deep kernels remain the same regardless of the depth of the network.
Our key result is that the decay for a certain class of kernels is characterized by a property related to differentiability of the kernel function around the point where the two inputs are aligned.
In particular, the property is preserved when adding layers with ReLU activations, showing that depth plays essentially no role for such networks in kernel regimes.
This highlights the limitations of the kernel regime for understanding the power of depth in fully-connected networks, and calls for new models of deep networks beyond kernels~\citep[see, \eg,][for recent works in this direction]{allen2020backward,chen2020towards}.
We also provide applications of our result to other kernels and architectures, and illustrate our results with numerical experiments on synthetic and real datasets.

\paragraph{Related work.}
Kernels for deep learning were originally derived by~\citet{neal1996bayesian} for shallow networks, and later for deep networks~\citep{cho2009kernel,daniely2016toward,lee2018deep,matthews2018gaussian}.
\citet{smola2001regularization,minh2006mercer} study regularization properties of dot-product kernels on the sphere using spherical harmonics, and~\citet{bach2017breaking} derives eigenvalue decays for such dot-product kernels arising from shallow networks with positively homogeneous activations including the ReLU.
Extensions to shallow NTK or Laplace kernels are studied by~\citet{basri2019convergence,bietti2019inductive,geifman2020similarity}.
The observation that depth does not change the decay of the NTK was previously made by~\citet{basri2020frequency} empirically, and~\citet{geifman2020similarity} provide a lower bound on the eigenvalues for deep networks; our work makes this observation rigorous by providing tight asymptotic decays.
Spectral properties of wide neural networks were also considered in~\citep{cao2019towards,fan2020spectra,ghorbani2019linearized,xie2017diverse,yang2019fine}.
\citet{azevedo2014sharp,scetbon2020risk} also study eigenvalue decays for dot-product kernels but focus on kernels with geometric decays, while our main focus is on polynomial decays.
Additional works on over-parameterized or infinite-width networks in lazy regimes include~\citep{allen2019learning,allen2019convergence,arora2019exact,arora2019fine,brand2020training,lee2020generalized,song2019quadratic}.

Concurrently to our work,~\citet{chen2020deep} also studied the RKHS of the NTK for deep ReLU networks, showing that it is the same as for the Laplace kernel on the sphere. They achieve this by studying asymptotic decays of Taylor coefficients of the kernel function at zero using complex-analytic extensions of the kernel functions, and leveraging this to obtain both inclusions between the two RKHSs. In contrast, we obtain precise descriptions of the RKHS and regularization properties in the basis of spherical harmonics for various dot-product kernels through spectral decompositions of integral operators, using (real) asymptotic expansions of the kernel function around endpoints. The equality between the RKHS of the deep NTK and Laplace kernel then easily follows from our results by the fact that the two kernels have the same spectral decay.

\section{Review of Approximation with Dot-Product Kernels}
\label{sec:background}

In this section, we provide a brief review of the kernels that arise from neural networks and their approximation properties.

\subsection{Kernels for wide neural networks}
\label{sub:nn_kernels}

Wide neural networks with random weights or weights close to random initialization naturally lead to certain dot-product kernels that depend on the architecture and activation function, which we now present,
with a focus on fully-connected architectures.

\paragraph{Random feature kernels.}
We first consider a two-layer (shallow) network of the form $f(x) = \frac{1}{\sqrt{m}}\sum_{j=1}^m v_j \sigma(w_j^\top x)$, for some activation function~$\sigma$.
When~$w_j \sim \mathcal N(0, I) \in \R^d$ are fixed and only~$v_j \in \R$ are trained with~$\ell_2$ regularization, this corresponds to using a random feature approximation~\cite{rahimi2007} of the kernel
\begin{equation}
\label{eq:rf_kernel}
k(x, x') = \E_{w\sim \mathcal N(0, I)}[\sigma(w^\top x) \sigma(w^\top x')].
\end{equation}
If~$x, x'$ are on the sphere, then by spherical symmetry of the Gaussian distribution, one may show that~$k$ is invariant to unitary transformations and takes the form~$k(x, x') = \kappa(x^\top x')$ for a certain function~$\kappa$.
More precisely, if~$\sigma(u) = \sum_{i \geq 0} a_i h_i(u)$ is the decomposition of~$\sigma$ in the basis of Hermite polynomials~$h_i$, which are orthogonal w.r.t.~the Gaussian measure, then we have~\citep{daniely2016toward}:
\begin{equation}
\label{eq:kappa_series}
\kappa(u) = \sum_{i \geq 0} a_i^2 u^i.
\end{equation}
Conversely, given a kernel function of the form above with~$\kappa(u) = \sum_{i \geq 0} b_i u^i$ with~$b_i \geq 0$, one may construct corresponding activations using Hermite polynomials by taking
\begin{equation}
\label{eq:sigma_series}
\sigma(u) = \sum_i a_i h_i(u), \quad a_i \in \{\pm \sqrt{b_i}\}.
\end{equation}
In the case where~$\sigma$ is~$s$-positively homogeneous, such as the ReLU~$\sigma(u) = \max(u, 0)$ (with~$s = 1$), or more generally~$\sigma_s(u) = \max(u, 0)^s$, then the kernel~\eqref{eq:rf_kernel} takes the form~$k(x, x') = \|x\|^s \|x'\|^s \kappa(\frac{x^\top x'}{\|x\| \|x'\|})$ for any~$x, x'$. This leads to RKHS functions of the form~$f(x) = \|x\|^s g(\frac{x}{\|x\|})$, with~$g$ in the RKHS of the kernel restricted to the sphere~\citep[Prop. 8]{bietti2019inductive}.
In particular, for the step and ReLU activations~$\sigma_0$ and $\sigma_1$, the functions~$\kappa$ are given by the following arc-cosine kernels~\citep{cho2009kernel}:\footnote{Here we assume a scaling~$\sqrt{2/m}$ instead of~$\sqrt{1/m}$ in the definition of~$f$, which yields~$\kappa(1) = 1$, a useful normalization for deep networks, as explained below.}
\begin{align}
\kappa_0(u) = \frac{1}{\pi} \left( \pi - \arccos(u) \right),  &\qquad
\kappa_1(u) = \frac{1}{\pi} \left( u \cdot (\pi - \arccos(u)) + \sqrt{1 - u^2} \right) \label{eq:kappa_arccos}.
\end{align}
Note that given a kernel function~$\kappa$, the corresponding activations~\eqref{eq:sigma_series} will generally not be homogeneous, thus the inputs to a random network with such activations need to lie on the sphere (or be appropriately normalized) in order to yield the kernel~$\kappa$.

\paragraph{Extension to deep networks.}
When considering a deep network with more than two layers and fixed random weights before the last layer, the connection to random features is less direct since the features are correlated through intermediate layers.
Nevertheless, when the hidden layers are wide enough, one still approaches a kernel obtained by letting the widths go to infinity~\citep[see, \eg,][]{daniely2016toward,lee2018deep,matthews2018gaussian}, which takes a similar form to the multi-layer kernels of~\citet{cho2009kernel}:
\begin{equation*}
k^L(x, x') = \kappa^L(x^\top x') := \underbrace{\kappa \circ \cdots \circ \kappa}_{L-1\text{ times}}(x^\top x'),
\end{equation*}
for~$x, x'$ on the sphere,
where~$\kappa$ is obtained as described above for a given activation~$\sigma$, and~$L$ is the number of layers.
We still refer to this kernel as the \emph{random features} (RF) kernel in this paper, noting that it is sometimes known as the ``conjugate kernel'' or NNGP kernel (for neural network Gaussian process).
It is usually good to normalize~$\kappa$ such that~$\kappa(1) = 1$, so that we also have~$\kappa^L(1) = 1$, avoiding exploding or vanishing behavior for deep networks.
In practice, this corresponds to using an activation-dependent scaling in the random weight initialization, which is commonly used by practitioners~\citep{he2015delving}.

\paragraph{Neural tangent kernels.}
When intermediate layers are trained along with the last layer using gradient methods, the resulting problem is non-convex and the statistical properties of such approaches are not well understood in general, particularly for deep networks.
However, in a specific over-parameterized regime, it may be shown that gradient descent can reach a global minimum while keeping weights very close to random initialization.
More precisely,
for a network~$f(x; \theta)$ parameterized by~$\theta$ with large width~$m$, the model remains close to its linearization around random initialization~$\theta_0$ throughout training, that is,~$f(x; \theta) \approx f(x; \theta_0) + \langle \theta - \theta_0, \nabla_\theta f(x; \theta_0) \rangle$.
This is also known as the \emph{lazy training} regime~\citep{chizat2018note}.
Learning is then equivalent to a kernel method with another architecture-specific kernel known as the \emph{neural tangent kernel}~\citep[NTK,][]{jacot2018neural}, given by
\begin{equation}
k_{\NTK}(x, x') = \lim_{m \to \infty} \langle \nabla f(x; \theta_0), \nabla f(x'; \theta_0) \rangle.
\end{equation}
For a simple two-layer network with activation~$\sigma$, it is then given by
\begin{equation}
k_{\NTK}(x, x') = (x^\top x') ~\E_w [\sigma'(w^\top x) \sigma'(w^\top x')] + \E_w [\sigma(w^\top x) \sigma(w^\top x')].
\end{equation}
For a ReLU network with~$L$ layers with inputs on the sphere, taking appropriate limits on the widths, one can show~\citep{jacot2018neural}: $k_{\NTK}(x, x') = \kappa^L_{\NTK}(x^\top x')$, with~$\kappa^1_{\NTK}(u) = \kappa^1(u) = u$ and for~$\ell = 2, \ldots, L$,
\begin{align}
\kappa^{\ell}(u) &= \kappa_1(\kappa^{\ell-1}(u)) \nonumber \\
\kappa^{\ell}_{\NTK}(u) &= \kappa^{\ell-1}_{\NTK}(u) \kappa_0(\kappa^{\ell-1}(u)) + \kappa^{\ell}(u), \label{eq:ntk_rec}
\end{align}
where~$\kappa_0$ and~$\kappa_1$ are given in~\eqref{eq:kappa_arccos}.

\subsection{Approximation and harmonic analysis with dot-product kernels}
\label{sub:dp_kernel_approx}

In this section, we recall approximation properties of dot-product kernels on the sphere, through spectral decompositions of integral operators in the basis of spherical harmonics.
Further background is provided in Appendix~\ref{sec:appx_background}.

\paragraph{Spherical harmonics and description of the RKHS.}
A standard approach to study the RKHS of a kernel is through the spectral decomposition of an integral operator~$T$ given by~$Tf(x) = \int k(x, y) f(y) d \tau(y)$ for some measure~$\tau$, leading to Mercer's theorem~\citep[\eg,][]{cucker2002mathematical}.
When inputs lie on the sphere~$\Sbb^{\dmone}$ in~$d$ dimensions, dot-product kernels of the form~$k(x, x') = \kappa(x^\top x')$ are rotationally-invariant, depending only on the angle between~$x$ and~$x'$.
Similarly to how translation-invariant kernels are diagonalized in the Fourier basis,
rotation-invariant kernels are diagonalized in the basis of spherical harmonics~\citep{smola2001regularization,bach2017breaking}, which lead to connections between eigenvalue decays and regularity as in the Fourier setting.
In particular, if~$\tau$ denotes the uniform measure on~$\Sbb^{\dmone}$, then~$T Y_{k,j} = \mu_k Y_{k,j}$,
where~$Y_{k,j}$ is the~$j$-th spherical harmonic polynomial of degree~$k$, where~$k$ plays the role of a frequency as in the Fourier case, and the number of such orthogonal polynomials of degree~$k$ is given by~$N(d,k) = \frac{2k + d - 2}{k} {k + d - 3 \choose d - 2}$, which grows as~$k^{d-2}$ for large~$k$.
The eigenvalues~$\mu_k$ only depend on the frequency~$k$ and are given by
\begin{equation}
\label{eq:mu_k}
\mu_k = \frac{\omega_{d-2}}{\omega_{d-1}} \int_{-1}^1 \kappa(t) P_k(t) (1 - t^2)^{(d-3)/2} dt,
\end{equation}
where~$P_k$ is the Legendre polynomial of degree~$k$ in~$d$ dimensions (also known as Gegenbauer polynomial when using a different scaling), and~$\omega_{d-1}$ denotes the surface of the sphere~$\Sbb^{d-1}$.
Mercer's theorem then states that the RKHS~$\Hcal$ associated to the kernel is given by
\begin{equation}
\label{eq:rkhs_mercer}
\Hcal = \left\{ f = \sum_{k\geq 0, \mu_k \ne 0} \sum_{j=1}^{N(d,k)} a_{k,j} Y_{k,j}(\cdot)
	\text{~~~~s.t.~~~} \|f\|_\Hcal^2 := \sum_{k\geq 0, \mu_k \ne 0} \sum_{j=1}^{N(d,k)} \frac{a_{k,j}^2}{\mu_k} < \infty \right\}.
\end{equation}
In particular, if~$\mu_k$ has a fast decay, then the coefficients~$a_{k,j}$ of~$f$ must also decay quickly with~$k$ in order for~$f$ to be in~$\Hcal$, which means~$f$ must have a certain level of regularity.
Similarly to the Fourier case, an exponential decay of~$\mu_k$ implies that the functions in~$\Hcal$ are infinitely differentiable,
while for polynomial decay~$\Hcal$ contains all functions whose derivatives only up to a certain order are bounded, as in Sobolev spaces.
If two kernels lead to the same asymptotic decay of~$\mu_k$ up to a constant, then by~\eqref{eq:rkhs_mercer} their RKHS norms are equivalent up to a constant, and thus they have the same RKHS.
For the specific case of random feature kernels arising from $s$-positively homogeneous activations, \citet{bach2017breaking} shows that~$\mu_k$ decays as~$k^{-d-2 s}$ for~$k$ of the opposite parity of~$s$, and is zero for large enough~$k$ of opposite parity, which results in a RKHS that contains even or odd functions (depending on the parity of~$s$) defined on the sphere with bounded derivatives up to order~$\beta := d/2 + s$ (note that~$\beta$ must be greater than~$(d-1)/2$ in order for the eigenvalues of~$T$ to be summable and thus lead to a well-defined RKHS).
\citet{bietti2019inductive} show that the same decay holds for the NTK of two-layer ReLU networks, with~$s = 0$ and a change of parity.
\citet{basri2019convergence} show that the parity constraints may be removed by adding a zero-initialized additive bias term when deriving the NTK.
We note that one can also obtain rates of approximation for Lipschitz functions from such decay estimates~\citep{bach2017breaking}.
Our goal in this paper is to extend this to more general dot-product kernels such as those arising from multi-layer networks, by providing a more general approach for obtaining decay estimates from differentiability properties of the function~$\kappa$.

\paragraph{Non-parametric regression.}
When the data are uniformly distributed on the sphere, we may also obtain convergence rates for non-parametric regression, which typically depend on the eigenvalue decay of the integral operator associated to the marginal distribution on inputs and on the decomposition of the regression function~$f^*(x) = \E[y | x]$ on the same basis~\citep[\eg,][]{caponnetto2007optimal}.\footnote{The rates easily extend to distributions with a density w.r.t.~the uniform distribution on the sphere, although the eigenbasis on which regularity is measured is then different.}
Then one may achieve optimal rates that depend mainly on the regularity of~$f^*$ when using various algorithms with tuned hyperparameters, but the choice of kernel and its decay may have an impact on the rates in some regimes, as well as on the difficulty of the optimization problem~\citep[see, \eg,][Section 4.3]{bach2013sharp}.

\section{Main Result and Applications to Deep Networks}
\label{sec:approx}

In this section, we present our main results concerning approximation properties of dot-product kernels on the sphere, and applications to the kernels arising from wide random neural networks.
We begin by stating our main theorem, which provides eigenvalue decays for dot-product kernels from differentiability properties of the kernel function~$\kappa$ at the endpoints~$\pm 1$.
We then present applications of this result to various kernels, including those coming from deep networks, showing in particular that the RKHSs associated to deep and shallow ReLU networks are the same (up to parity constraints).

\subsection{Statement of our main theorem}

We now state our main result regarding the asymptotic eigenvalue decay of dot-product kernels.
Recall that we consider a kernel of the form~$k(x, y) = \kappa(x^\top y)$ for~$x, y \in \Sbb^{\dmone}$,
and seek to obtain decay estimates on the eigenvalues~$\mu_k$ defined in~\eqref{eq:mu_k}.
We now state our main theorem, which derives the asymptotic decay of~$\mu_k$ with~$k$ in terms of differentiability properties of~$\kappa$ around~$\{\pm 1\}$, assuming that~$\kappa$ is infinitely differentiable on~$(-1,1)$. This latter condition is always verified when~$\kappa$ takes the form of a power series~\eqref{eq:kappa_series} with~$\kappa(1) = 1$, since the radius of convergence is at least~$1$.
We also require a technical condition, namely the ability to ``differentiate asymptotic expansions'' of~$\kappa$ at~$\pm 1$, which holds for the kernels considered in this work.

\begin{theorem}[Decay from regularity of~$\kappa$ at endpoints, simplified]
\label{thm:decay}
Let~$\kappa: [-1,1] \to \R$ be a function that is~$C^\infty$ on~$(-1,1)$ and has the following asymptotic expansions around~$\pm 1$:
\begin{align}
\kappa(1-t) &= p_1(t) + c_{1} t^{\nu} + o(t^{\nu}) \\
\kappa(-1+t) &= p_{-1}(t) + c_{-1} t^{\nu} + o(t^{\nu}),
\end{align}
for~$t \geq 0$, where~$p_1, p_{-1}$ are polynomials and~$\nu > 0$ is not an integer.
Also, assume that the derivatives of~$\kappa$ admit similar expansions obtained by differentiating the above ones.
Then, there is an absolute constant~$C(d,\nu)$ depending on~$d$ and~$\nu$ such that:
\begin{itemize}[noitemsep]
	\item For~$k$ even, if~$c_1 \ne -c_{-1}$: $\mu_k \sim (c_1 + c_{-1}) C(d,\nu) k^{-d-2 \nu + 1}$;
	\item For~$k$ odd, if~$c_1 \ne c_{-1}$: $\mu_k \sim (c_1 - c_{-1}) C(d,\nu) k^{-d-2 \nu + 1}$.
\end{itemize}
In the case~$|c_1| = |c_{-1}|$, then we have~$\mu_k = o(k^{-d-2 \nu + 1})$ for one of the two parities (or both if~$c_1 = c_{-1} = 0$).
If~$\kappa$ is infinitely differentiable on~$[-1, 1]$ so that no such~$\nu$ exists, then~$\mu_k$ decays faster than any polynomial.
\end{theorem}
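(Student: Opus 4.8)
The plan is to analyze the integral $\mu_k = \frac{\omega_{d-2}}{\omega_{d-1}}\int_{-1}^1 \kappa(t) P_k(t)(1-t^2)^{(d-3)/2}\,dt$ by localizing the contribution near the endpoints $t = \pm 1$. The key observation is that the polynomial parts $p_1, p_{-1}$ of the asymptotic expansions only contribute to finitely many eigenvalues: a polynomial of degree $m$ is orthogonal to $P_k$ for all $k > m$ with respect to the weight $(1-t^2)^{(d-3)/2}$, so for $k$ large enough these parts drop out entirely and $\mu_k$ is determined by the remainder terms $c_{\pm 1} t^\nu + o(t^\nu)$. This reduces the problem to estimating, for large $k$, integrals of the form $\int_{-1}^1 |1-t|^\nu P_k(t)(1-t^2)^{(d-3)/2}\,dt$ and its mirror at $-1$, plus lower-order corrections.

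First I would set up the decomposition $\kappa(t) = (\text{poly terms}) + c_1(1-t)^\nu + c_{-1}(1+t)^\nu + (\text{remainder})$ valid near each endpoint, glued with a smooth partition of unity so that the "remainder" is globally $o(|1-t|^\nu)$ near $+1$ and $o(|1+t|^\nu)$ near $-1$ and smooth in the interior. Second, I would invoke (or prove as a lemma) the classical asymptotics for the projection of a fractional power singularity onto Gegenbauer/Legendre polynomials: there is a known formula expressing $\int_{-1}^1 (1-t)^\nu P_k(t)(1-t^2)^{(d-3)/2}\,dt$ in terms of Gamma functions — it behaves like $c(d,\nu) k^{-d-2\nu+1}$ for non-integer $\nu$ (this is where $C(d,\nu)$ comes from, and the non-integrality of $\nu$ is exactly what guarantees the leading coefficient is nonzero). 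The endpoint at $-1$ contributes the same magnitude but with a sign $(-1)^k$ by the symmetry $P_k(-t) = (-1)^k P_k(t)$, which is the source of the parity-dependent combinations $c_1 + c_{-1}$ (even $k$) and $c_1 - c_{-1}$ (odd $k$). Third, I would handle the $o(t^\nu)$ remainder terms: the hypothesis that the expansions can be differentiated lets me integrate by parts (using the fact that $P_k$ satisfies a Sturm–Liouville equation, so $\frac{d}{dt}[(1-t^2)^{(d-1)/2} P_k'(t)]$ is a multiple of $(1-t^2)^{(d-3)/2}P_k(t)$) repeatedly, trading each integration by parts for a factor that gains $k^{-2}$ while the singularity exponent drops by one, until the boundary terms and the remaining integral are provably $o(k^{-d-2\nu+1})$. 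The smooth interior piece contributes faster than any polynomial by the same integration-by-parts argument carried arbitrarily far.

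The main obstacle I anticipate is making the remainder estimate rigorous: one needs the $o(t^\nu)$ bound to survive the integration-by-parts procedure, which is precisely why the theorem assumes the asymptotic expansions are \emph{differentiable} — without control on derivatives of the remainder, integrating by parts against $P_k$ could produce uncontrolled terms. Concretely, after $j$ integrations by parts one faces $\int (1-t^2)^{(d-3)/2+j}\, g^{(j)}(t)\, \tilde P_{k}(t)\,dt$ where $g$ is the remainder and $\tilde P_k$ is a lower-degree-shifted Gegenbauer-type polynomial bounded suitably near the endpoints; one must show $g^{(j)}(t) = o(t^{\nu-j})$ near $+1$, which is exactly the "differentiate the expansion" hypothesis, and then combine with uniform bounds on the shifted polynomials and the weight to get a bound that is genuinely $o(k^{-d-2\nu+1})$ rather than merely $O$. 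A secondary technical point is justifying the sharp asymptotic for the pure singularity $\int (1-t)^\nu P_k(t)(1-t^2)^{(d-3)/2}\,dt$; I would either cite the classical Gegenbauer coefficient formula (e.g., via the hypergeometric representation, which for non-integer $\nu$ yields a ratio of Gamma functions that is asymptotically $\Theta(k^{-d-2\nu+1})$ with a nonzero, explicitly computable constant) or derive it by a Laplace-type analysis of the known integral representation of $P_k$ near $t=1$, where $P_k$ behaves like a Bessel function on the scale $1-t \sim k^{-2}$. The parity statement when $|c_1| = |c_{-1}|$ then follows immediately: the leading terms cancel for one parity, and the next term in the expansion (or the $o$-term) governs that parity, giving $\mu_k = o(k^{-d-2\nu+1})$ there. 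The final claim — $C^\infty$ on the closed interval forces super-polynomial decay — is the limiting case where every $\nu$ can be taken arbitrarily large, handled by iterating the Sturm–Liouville integration by parts without ever hitting a singularity.
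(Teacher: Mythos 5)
Your overall architecture matches the paper's: subtract model functions with the prescribed endpoint singularity whose Legendre coefficients can be computed sharply (you propose $(1\mp t)^{\nu}$ with classical Jacobi-coefficient/Mehler--Heine asymptotics; the paper uses $\phi_\nu(t)=(1-t^2)^\nu$ and $\bar\phi_\nu(t)=t(1-t^2)^\nu$, evaluated in closed form via a ${}_3F_2$ and Watson's theorem in Lemma~\ref{lemma:phi_nu}), read off the parity combinations $c_1\pm c_{-1}$ from the reflection $P_k(-t)=(-1)^kP_k(t)$ (equivalently, from the even/odd model functions), and control what is left by iterated Sturm--Liouville integration by parts using the differentiated-expansion hypothesis. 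Those parts are sound and essentially equivalent to the paper's route.

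The genuine gap is in your remainder estimate. After subtracting only the order-$\nu$ singular terms, your remainder is merely $o((1\mp t)^{\nu})$ with derivatives $o((1\mp t)^{\nu-j})$, and the integration-by-parts scheme you describe can be iterated only until the singularity hits the integrability/boundary-term threshold, i.e.\ roughly $\lceil \nu+\tfrac{d-3}{2}\rceil$ times; bounding the final integral by $|P_k|\le 1$ then yields $o(k^{-d-2\nu+3})$, not $o(k^{-d-2\nu+1})$ --- you are short by a factor of $k^{2}$ in the exponent, so this is not a matter of upgrading an $O$ to an $o$ at the same rate, as your discussion suggests. Crude absolute-value bounds cannot close this gap (even the envelope bound $|P_k(t)|\lesssim k^{-(d-2)/2}(1-t^2)^{-(d-2)/4}$ fails in the oscillatory region without exploiting cancellation), which is exactly why the paper's full statement (Theorem~\ref{thm:decay_full}) imposes the extra hypothesis that the expansion continues past order $\nu+1$ with finitely many intermediate terms: the proof subtracts model functions for \emph{all} terms up to and including an order-$(\nu_1+1)$ correction ($\psi_{r+1}$), so that the leftover has exponent $\nu_1+1+\epsilon$ and the lossy bound of Lemma~\ref{lemma:diff_decay} already gives $O(k^{-d-2(\nu_1+1+\epsilon)+3})=o(k^{-d-2\nu_1+1})$. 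To repair your argument you must either add and use that stronger hypothesis in the same way, or replace the crude final bound by a genuinely finer oscillatory estimate (e.g.\ uniform Bessel-type asymptotics of $P_k$ in the transition region $1-t\sim k^{-2}$ with controlled errors), which your proposal does not develop.
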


The full theorem is given in Appendix~\ref{sec:appx_thm_proof} along with its proof, and requires an additional mild technical condition on the expansion which is verified for all kernels considered in this paper, namely, a finite number of terms in the expansions with exponents between~$\nu$ and~$\nu+1$.
The proof relies on integration by parts using properties of Legendre polynomials, in a way reminiscent of fast decays of Fourier series for differentiable functions, and on precise computations of the decay for simple functions of the form~$t \mapsto (1 - t^2)^\nu$.
This allows us to obtain the asymptotic decay for general kernel functions~$\kappa$ as long as the behavior around the endpoints is known, in contrast to previous approaches which rely on the precise form of~$\kappa$, or of the corresponding activation in the case of arc-cosine kernels~\citep{bach2017breaking,basri2019convergence,bietti2019inductive,geifman2020similarity}.
This enables the study of more general and complex kernels, such as those arising from deep networks, as discussed below.
When~$\kappa$ is of the form~$\kappa(t) = \sum_k b_k t^k$, the exponent~$\nu$ in Theorem~\ref{thm:decay} is also related to the decay of coefficients~$b_k$.
Such coefficients provide a dimension-free description of the kernel which may be useful for instance in the study of kernel methods in certain high-dimensional regimes~\citep[see, \eg,][]{el2010spectrum,ghorbani2019linearized,liang2019risk}.
We show in Appendix~\ref{sub:appx_dimension_free} that the~$b_k$ may be recovered from the~$\mu_k$ by taking high-dimensional limits~$d \to \infty$, and that they decay as~$k^{-\nu-1}$.

\subsection{Consequences for ReLU networks}
\label{sub:deep_relu}

When considering neural networks with ReLU activations, the corresponding random features and neural tangent kernels depend on the arc-cosine functions~$\kappa_1$ and~$\kappa_0$ defined in~\eqref{eq:kappa_arccos}.
These have the following expansions (with generalized exponents) near~$+1$:
\begin{align}
\kappa_0(1 - t) &= 1 - \frac{\sqrt{2}}{\pi} t^{1/2}  + O(t^{3/2}) \label{eq:kappa0_expansion} \\
\kappa_1(1 - t) &= 1 - t + \frac{2 \sqrt{2}}{3 \pi} t^{3/2} + O(t^{5/2}). \label{eq:kappa1_expansion}
\end{align}
Indeed, the first follows from integrating the expansion of the derivative using the relation $\frac{d}{dt}\arccos(1-t) = \frac{1}{\sqrt{2t}\sqrt{1 - t/2}}$ and the second follows from the first using the expression of~$\kappa_1$ in~\eqref{eq:kappa_arccos}.
Near~$-1$, we have by symmetry~$\kappa_0(-1+t) = 1 - \kappa_0(1-t) = \frac{\sqrt{2}}{\pi} t^{1/2}  + O(t^{3/2})$, and we have $\kappa_1(-1+t) = \frac{2\sqrt{2}}{3 \pi} t^{3/2} + O(t^{5/3})$ by using~$\kappa_1' = \kappa_0$ and~$\kappa_1(-1) = 0$.
The ability to differentiate the expansions follows from~\citep[Theorem VI.8, p.419]{flajolet2009analytic}, together with a complex-analytic property known as~$\Delta$-analyticity, which was shown to hold for RF and NTK kernels by~\citet{chen2020deep}.
By Theorem~\ref{thm:decay}, we immediately obtain a decay of~$k^{-d-2}$ for even coefficients for~$\kappa_1$, and~$k^{-d}$ for odd coefficients for~$\kappa_0$, recovering results of~\citet{bach2017breaking}.
For the two-layer ReLU NTK, we have~$\kappa^2_{\NTK}(u) = u \kappa_0(u) + \kappa_1(u)$, leading to a similar expansion to~$\kappa_0$ and thus decay, up to a change of parity due to the factor~$u$ which changes signs in the expansion around~$-1$; this recovers~\citet{bietti2019inductive}.
We note that for these specific kernels,~\citet{bach2017breaking,bietti2019inductive} show in addition that coefficients of the opposite parity are exactly zero for large enough~$k$, which imposes parity constraints on functions in the RKHS, although such a constraint may be removed in the NTK case by adding a zero-initialized bias term~\citep{basri2019convergence}, leading to a kernel~$\kappa_{\NTK,b}(u) = (u+1) \kappa_0(u) + \kappa_1(u)$.

\paragraph{Deep networks.}
Recall from Section~\ref{sub:nn_kernels} that the RF and NTK kernels for deep ReLU networks may be obtained through compositions and products using the functions~$\kappa_1$ and~$\kappa_0$.
Since asymptotic expansions can be composed and multiplied, we can then obtain expansions for the deep RF and NTK kernels.
The following results show that such kernels have the same eigenvalue decay as the ones for the corresponding shallow (two-layer) networks.
\begin{corollary}[Deep RF decay.]
\label{cor:rf_decay}
For the random neuron kernel~$\kappa^L_{\RF}$ of an~$L$-layer ReLU network with~$L \geq 3$,
we have~$\mu_k \sim C(d,L) k^{-d-2}$, where~$C(d,L)$ is different depending on the parity of~$k$ and grows linearly with~$L$.
\end{corollary}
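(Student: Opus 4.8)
The plan is to apply Theorem~\ref{thm:decay} to the function $\kappa = \kappa^L_{\RF} = \kappa_1^{\circ(L-1)}$, so the real work is to compute asymptotic expansions of this $(L-1)$-fold composition around $+1$ and $-1$ in the required form $p_{\pm 1}(t) + c_{\pm 1} t^{\nu} + o(t^{\nu})$ with $\nu$ non-integer, and then to check the mild side conditions. From~\eqref{eq:kappa1_expansion}, near $+1$ we have $\kappa_1(1-t) = 1 - t + \tfrac{2\sqrt 2}{3\pi} t^{3/2} + O(t^{5/2})$, where, since $\arccos(1-t) = \sqrt{2t}\,\phi(t)$ for an analytic $\phi$, every further term of $\kappa_1(1-t)$ is a half-integer power of $t$; near $-1$, $\kappa_1(-1+t) = \tfrac{2\sqrt 2}{3\pi} t^{3/2} + O(t^{5/2})$ with vanishing constant and linear parts, again a series in half-integer powers $\ge 3/2$ (indeed $\kappa_1(-1+t) = \int_0^t \kappa_0(-1+s)\,ds$ and $\kappa_0(-1+s) = \tfrac1\pi \arccos(1-s)$). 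I expect the relevant exponent to be $\nu = \tfrac32$ in both cases, which gives the advertised rate $k^{-d-2\nu+1} = k^{-d-2}$.

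\emph{Near $+1$.} Since $\kappa_1(1)=1$, the point $1$ is fixed; writing $\kappa_1^{\circ j}(1-t) = 1 - t_j$ turns the iteration into the scalar recursion $t_0 = t$, $t_{j+1} = t_j - \tfrac{2\sqrt 2}{3\pi} t_j^{3/2} + O(t_j^{5/2})$. A straightforward induction then gives $t_{L-1} = t - (L-1)\tfrac{2\sqrt 2}{3\pi} t^{3/2} + O(t^2)$, so $\kappa^L_{\RF}(1-t) = 1 - t + (L-1)\tfrac{2\sqrt 2}{3\pi} t^{3/2} + o(t^{3/2})$; hence $p_1(t) = 1-t$ and $c_1 = (L-1)\tfrac{2\sqrt 2}{3\pi} > 0$.

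\emph{Near $-1$.} Here $u_1 := \kappa_1(-1+t) = \tfrac{2\sqrt 2}{3\pi} t^{3/2} + O(t^{5/2}) \to 0^+$, and $\kappa^L_{\RF}(-1+t) = h(u_1)$ with $h := \kappa_1^{\circ(L-2)}$. Because $\kappa_1$ is strictly increasing and maps $(-1,1)$ into $(0,1)$, all iterates $v_j := \kappa_1^{\circ j}(0)$ (with $v_0 = 0$) lie in $(-1,1)$, so $h$ is $C^\infty$ near $0$; Taylor-expanding, $\kappa^L_{\RF}(-1+t) = h(0) + h'(0)\tfrac{2\sqrt 2}{3\pi} t^{3/2} + o(t^{3/2})$, so $p_{-1}(t) = h(0) \in (0,1)$ and $c_{-1} = h'(0)\tfrac{2\sqrt 2}{3\pi}$, where $h'(0) = \prod_{j=0}^{L-3} \kappa_1'(v_j) = \prod_{j=0}^{L-3} \kappa_0(v_j) \in (0,\tfrac12]$ (using $\kappa_0(0) = \tfrac12$ and $0 \le \kappa_0 \le 1$). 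In particular $0 < c_{-1} \le \tfrac12\cdot\tfrac{2\sqrt 2}{3\pi} < c_1$ whenever $L \ge 3$, whereas for $L = 2$ one has $h = \mathrm{id}$, $h'(0) = 1$ and $c_1 = c_{-1}$, which is exactly where the hypothesis $L \ge 3$ enters.

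It remains to check the technical hypotheses and conclude. Differentiability of the expansions follows, as in the two-layer case discussed above, from $\Delta$-analyticity of $\kappa^L_{\RF}$ --- a composition of $\Delta$-analytic functions --- together with~\citep[Theorem VI.8]{flajolet2009analytic}; and the only residual exponent lying strictly between $\nu = \tfrac32$ and $\nu+1 = \tfrac52$ is the integer $2$ (appearing near $+1$ from the composition), so the finiteness condition in the full version of Theorem~\ref{thm:decay} holds. Since $c_1 + c_{-1} > 0$ (so $c_1 \ne -c_{-1}$) and $c_1 \ne c_{-1}$, Theorem~\ref{thm:decay} with $\nu = \tfrac32$ yields $\mu_k \sim (c_1 + c_{-1}) C(d,\tfrac32)\, k^{-d-2}$ for even $k$ and $\mu_k \sim (c_1 - c_{-1}) C(d,\tfrac32)\, k^{-d-2}$ for odd $k$; both constants equal $\bigl((L-1) \pm h'(0)\bigr)\tfrac{2\sqrt 2}{3\pi}\, C(d,\tfrac32)$ with $h'(0)$ bounded in $(0,\tfrac12]$, hence grow linearly in $L$, proving the claim. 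I expect the main obstacle to be the near-$+1$ induction: carrying the $t^{3/2}$ coefficient exactly through $L-1$ compositions while controlling the integer powers ($t^2$ and beyond) and higher half-integer powers produced along the way so that they land inside the $o(t^{3/2})$ error and respect the finiteness condition, and verifying once and for all that every intermediate iterate stays strictly inside $(-1,1)$ so the composed maps are smooth where they are differentiated.
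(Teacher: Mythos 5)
Your proposal is correct and follows essentially the same route as the paper's proof: expand $\kappa_1^{\circ(L-1)}$ around $+1$ by induction to get $c_1=(L-1)\tfrac{2\sqrt2}{3\pi}$ with exponent $\nu=3/2$, control the expansion around $-1$ to get a bounded nonzero coefficient $c_{-1}$ (your chain-rule formulation $c_{-1}=h'(0)\tfrac{2\sqrt2}{3\pi}$ with $h=\kappa_1^{\circ(L-2)}$ is just the unrolled form of the paper's induction $c_\ell=\kappa_1'(b_{\ell-1})c_{\ell-1}$), invoke $\Delta$-analyticity for the differentiability of expansions, and apply Theorem~\ref{thm:decay} to conclude the $k^{-d-2}$ decay with constants growing linearly in $L$.
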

\begin{corollary}[Deep NTK decay.]
\label{cor:ntk_decay}
For the neural tangent kernel~$\kappa^L_{\NTK}$ of an~$L$-layer ReLU network with~$L \geq 3$,
we have~$\mu_k \sim C(d, L) k^{-d}$, where~$C(d,L)$ is different depending on the parity of~$k$ and grows quadratically with~$L$ (it grows linearly with~$L$ when considering the normalized NTK~$\kappa^L_{\NTK}/L$, which satisfies~$\kappa^L_{\NTK}(1)/L=1$).
\end{corollary}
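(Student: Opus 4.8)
The plan is to compute, for each of $\kappa^L_{\RF}$ and $\kappa^L_{\NTK}$, the leading non-integer term in the asymptotic expansion at $t\to 0^+$ of $\kappa(1-t)$ and $\kappa(-1+t)$, and then read off the eigenvalue decay directly from Theorem~\ref{thm:decay}. Both kernels are built from the arc-cosine functions $\kappa_0,\kappa_1$ by composition, product and sum, and these functions are $\Delta$-analytic~\citep{chen2020deep}; hence they admit full asymptotic expansions at $\pm 1$ in half-integer powers of $t$ (no logarithms), which may be added, multiplied and composed termwise and may be differentiated~\citep[Thm.~VI.8]{flajolet2009analytic}. In particular the technical hypotheses of Theorem~\ref{thm:decay} (differentiable expansions, finitely many exponents in $(\nu,\nu+1)$ — here exactly one, the first integer exponent above $\nu$) hold automatically, so the only real work is bookkeeping of leading coefficients through the recursions.

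\emph{Deep RF.} Here $\kappa^L_{\RF}=\kappa_1\circ\cdots\circ\kappa_1$ ($L-1$ copies). From~\eqref{eq:kappa1_expansion}, $\kappa_1(1-t)=1-t+a\,t^{3/2}+O(t^2)$ with $a=\tfrac{2\sqrt2}{3\pi}$; writing $\kappa_1(1-t)=1-s$ with $s=t-a\,t^{3/2}+O(t^2)$ and using $s^{3/2}=t^{3/2}+O(t^2)$ gives, by induction on $\ell$, $\kappa^\ell_{\RF}(1-t)=1-t+(\ell-1)a\,t^{3/2}+O(t^2)$, so near $+1$ we have $\nu=3/2$ and $c_1=(L-1)a$. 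Near $-1$ the behaviour is different: $\kappa_1(-1)=0$ and $\kappa_1(-1+t)=a\,t^{3/2}+O(t^{5/2})$, and applying $\kappa_1$ once more lands at the interior point $\kappa_1(0)=1/\pi$; more generally $\kappa^\ell_{\RF}(-1)\in[1/\pi,1)$ for all $\ell\ge 3$, so $\kappa_1$ may be expanded analytically there and the $t^{3/2}$ coefficient is transported multiplicatively, giving $\kappa^L_{\RF}(-1+t)=p_{-1}(t)+c_{-1}t^{3/2}+o(t^{3/2})$ with $c_{-1}=\tfrac a2\prod_{\ell=3}^{L-1}\kappa_0(\kappa^\ell_{\RF}(-1))\in(0,\tfrac a2]$, bounded in $L$. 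Since $c_1\ge 2a>\tfrac a2\ge c_{-1}>0$ for $L\ge 3$ we have $|c_1|>|c_{-1}|$, so both parity cases of Theorem~\ref{thm:decay} apply with $\nu=3/2$, yielding $\mu_k\sim C(d,L)k^{-d-2}$ with $C(d,L)=(c_1\pm c_{-1})C(d,\tfrac32)$ ($+$ for even $k$, $-$ for odd $k$); the constant is dominated by $c_1=(L-1)a$, hence linear in $L$.

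\emph{Deep NTK.} By~\eqref{eq:ntk_rec}, $\kappa^\ell_{\NTK}=\kappa^{\ell-1}_{\NTK}\,\kappa_0(\kappa^{\ell-1}_{\RF})+\kappa^\ell_{\RF}$, and at $1$ this gives $\kappa^\ell_{\NTK}(1)=\ell$. Near $+1$, $\kappa_0(\kappa^{\ell-1}_{\RF}(1-t))=1-\tfrac{\sqrt2}{\pi}t^{1/2}+O(t)$ by~\eqref{eq:kappa0_expansion}, so the smallest non-integer exponent of $\kappa^\ell_{\NTK}(1-t)$ is $\nu=1/2$; matching $t^{1/2}$ coefficients gives $e_\ell=e_{\ell-1}-\tfrac{\sqrt2}{\pi}(\ell-1)$ with $e_1=0$, hence $c_1=e_L=-\tfrac{\sqrt2}{\pi}\tfrac{L(L-1)}{2}$, quadratic in $L$. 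Near $-1$, $\kappa^{\ell-1}_{\RF}(-1)\in\{0\}\cup[1/\pi,1)$ for $\ell\ge 3$, so $\kappa_0(\kappa^{\ell-1}_{\RF}(-1+t))$ is a constant in $(0,1)$ plus $O(t^{3/2})$ while $\kappa^\ell_{\RF}(-1+t)$ has no $t^{1/2}$ term; thus the $t^{1/2}$ coefficient $f_\ell$ of $\kappa^\ell_{\NTK}(-1+t)$ obeys $f_\ell=\kappa_0(\kappa^{\ell-1}_{\RF}(-1))\,f_{\ell-1}$ for $\ell\ge 3$ with $f_2=-\tfrac{\sqrt2}{\pi}$, so $c_{-1}=f_L=-\tfrac{\sqrt2}{\pi}\prod_{m=2}^{L-1}\kappa_0(\kappa^m_{\RF}(-1))$ is bounded with $|c_{-1}|\le\tfrac{\sqrt2}{2\pi}$. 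Since $|c_1|=\tfrac{\sqrt2}{\pi}\tfrac{L(L-1)}{2}>\tfrac{\sqrt2}{2\pi}\ge|c_{-1}|$ for $L\ge 3$, both parity cases of Theorem~\ref{thm:decay} apply with $\nu=1/2$, giving $\mu_k\sim C(d,L)k^{-d}$ with $C(d,L)=(c_1\pm c_{-1})C(d,\tfrac12)$; the constant is dominated by $c_1$, hence quadratic in $L$, and linear in $L$ after replacing $\kappa^L_{\NTK}$ by $\kappa^L_{\NTK}/L$.

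\emph{Main obstacle.} The crux is the coefficient bookkeeping through the compositions, and the decisive structural fact is that $u=1$ is a \emph{parabolic} fixed point of $\kappa_1$ (since $\kappa_1'(1)=\kappa_0(1)=1$): the half-integer corrections therefore do not contract under iteration but accumulate additively along the depth, which is precisely the origin of the $L$-dependence of $c_1$; near $u=-1$, by contrast, the iterates of $\kappa_1$ are swept into the interior of $(-1,1)$, where $\kappa_1$ is strictly contracting ($\kappa_0<1$) and the corresponding coefficient stays bounded. Turning the informal ``add/multiply/compose asymptotic expansions, then differentiate'' manipulations into rigorous statements — in particular verifying that remainders stay $o(\cdot)$ of the advertised order after each composition — is where $\Delta$-analyticity and~\citep[Thm.~VI.8]{flajolet2009analytic} are needed; once this is in place, both corollaries follow by substitution into Theorem~\ref{thm:decay}.
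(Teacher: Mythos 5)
Your proposal is correct and follows essentially the same route as the paper's proof: expand $\kappa^\ell_{\NTK}$ at $\pm 1$ by induction on the recursion \eqref{eq:ntk_rec}, obtaining a $t^{1/2}$ coefficient that accumulates as $-\frac{\sqrt{2}}{\pi}\sum_{s=1}^{\ell-1}s$ at $+1$ (hence quadratic in $L$) while staying bounded at $-1$ because the iterates of $\kappa_1$ fall into the interior where $\kappa_0<1$, then invoke Theorem~\ref{thm:decay} with $\nu=1/2$, with differentiability of the expansions justified via $\Delta$-analyticity and the transfer theorem of Flajolet--Sedgewick. Your explicit product formula and bound $|c_{-1}|\le \frac{\sqrt{2}}{2\pi}$ for the coefficient at $-1$ is a slightly more quantitative version of the paper's bound $0<c_\ell<c$, but the argument is the same.
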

The proofs, given in Appendix~\ref{sec:appx_proofs}, use the fact that~$\kappa_1 \circ \kappa_1$ and~$\kappa_1$ have the same non-integer exponent factors in their expansions, and similarly for~$\kappa_0 \circ \kappa_1$ and~$\kappa_0$.
One benefit compared to the shallow case is that the odd and even coefficients are both non-zero with the same decay, which removes the parity constraints, but as mentioned before, simple modifications of the shallow kernels can yield the same effect.

\paragraph{The finite neuron case.}
For two-layer networks with a finite number of neurons, the obtained models correspond to random feature approximations of the limiting kernels~\citep{rahimi2007}.
Then, one may approximate RKHS functions and achieve optimal rates in non-parametric regression as long as the number of random features exceeds a certain degrees-of-freedom quantity~\citep{bach2017equivalence,rudi2017generalization}, which is similar to standard such quantities in the analysis of ridge regression~\citep{caponnetto2007optimal}, at least when the data are uniformly distributed on the sphere (otherwise the quantity involved may be larger unless features are sampled non-uniformly).
Such a number of random features is optimal for a given eigenvalue decay of the integral operator~\citep{bach2017equivalence}, which implies that the shallow random feature architectures provides optimal approximation for the multi-layer ReLU kernels as well, since the shallow and deep kernels have the same decay, up to the parity constraint.
In order to overcome this constraint for shallow kernels while preserving decay, one may consider vector-valued random features of the form~$(\sigma(w^\top x), x_1 \sigma(w^\top x), \ldots, x_d \sigma(w^\top x))$ with~$w \sim \Ncal(0, I)$, leading to a kernel~$\kappa_{\sigma,b}(u) = (1 + u) \kappa_\sigma(u)$, where~$\kappa_\sigma$ is the random feature kernel corresponding to~$\sigma$.
With~$\sigma(u) = \max(0,u)$,~$\kappa_{\sigma,b}$ has the same decay as~$\kappa^L_{\RF}$, and when~$\sigma(u) = \1\{u \geq 0\}$ it has the same decay as~$\kappa^L_{\NTK}$.

\subsection{Extensions to other kernels}
\label{sub:extensions}

We now provide other examples of kernels for which Theorem~\ref{thm:decay} provides approximation properties thanks to its generality.

\paragraph{Laplace kernel and generalizations.}
The Laplace kernel~$k_{c}(x, y) = e^{-c\|x - y\|}$ has been found to provide similar empirical behavior to neural networks when fitting randomly labeled data with gradient descent~\citep{belkin2018understand}.
Recently, \citet{geifman2020similarity} have shown that when inputs are on the sphere, the Laplace kernel has the same decay as the NTK, which may suggest a similar conditioning of the optimization problem as for fully-connected networks, as discussed in Section~\ref{sub:dp_kernel_approx}.
Denoting~$\kappa_{c}(u) = e^{-c\sqrt{1-u}}$ so that~$k_{c}(x, y) = \kappa_{c\sqrt{2}}(x^\top y)$, we may easily recover this result using Theorem~\ref{thm:decay} by noticing that~$\kappa_{c}$ is infinitely differentiable around~$-1$ and satisfies
\begin{equation*}
\kappa_c(1-t) = e^{-c\sqrt{t}} = 1 - c\sqrt{t} + O(t),
\end{equation*}
which yields the same decay~$k^{-d}$ as the NTK.
\citet{geifman2020similarity} also consider a heuristic generalization of the Laplace kernel with different exponents, $\kappa_{c,\gamma}(u) = e^{-c(1-u)^\gamma}$.
Theorem~\ref{thm:decay} allows us to obtain a precise decay for this kernel as well using~$\kappa_{c,\gamma}(1-t) = 1 - c t^\gamma + O(t^{2 \gamma})$, which is of the form~$k^{-d-2 \gamma+1}$ for non-integer~$\gamma > 0$, and in particular approaches the limiting order of smoothness~$(d-1)/2$ when~$\gamma \to 0$.\footnote{For~$\kappa_c$ and~$\kappa_{c,\gamma}$, the ability to differentiate expansions is straightforward since we have the exact expansion~$\kappa_{c,\gamma}(u) = \sum_k c^k (1 - u)^{\gamma k} / k!$, which may be differentiated term-by-term.}

\paragraph{Deep kernels with step activations.}
We saw in Section~\ref{sub:deep_relu} that for ReLU activations, depth does not change the decay of the corresponding kernels.
In contrast, when considering step activations~$\sigma(u) = \1\{u \geq 0\}$, we show in Appendix~\ref{sub:deep_step_decay} that approximation properties of the corresponding random neuron kernels (of the form~$\kappa_0 \circ \cdots \circ \kappa_0$) improve with depth, leading to a decay~$k^{-d-2 \nu+1}$ with $\nu = 1/2^{L-1}$ for~$L$ layers. This also leads to an RKHS which becomes as large as allowed (order of smoothness close to~$(d-1)/2$) when~$L \to \infty$.
While this may suggest a benefit of depth, note that step activations make optimization hard for anything beyond a linear regime with random weights, since the gradients with respect to inner neurons vanish.
Theorem~\ref{thm:decay} may also be applied to deep kernels with other positively homogeneous activations~$\sigma_s(u) = \max(0, u)^s$ with~$s \geq 2$, for which endpoint expansions easily follow from those of~$\kappa_0$ or~$\kappa_1$ through integration.

\paragraph{Infinitely differentiable kernels.}
Finally, we note that Theorem~\ref{thm:decay} shows that kernels associated to infinitely differentiable activations (which are themselves infinitely differentiable, see~\citet{daniely2016toward}\footnote{This requires the mild additional condition that each derivative of the activation is in~$L^2$ w.r.t.~the Gaussian measure.}), as well as Gaussian kernels on the sphere of the form~$e^{-c(1-x^\top y)}$, have faster decays than any polynomial. This results in a ``small'' RKHS that only contains smooth functions.
See~\citet{azevedo2014sharp,minh2006mercer} for a more precise study of the decay for Gaussian kernels on the sphere.

\section{Numerical experiments}
\label{sec:experiments}

We now present numerical experiments on synthetic and real data to illustrate our theory.
Our code is available at \url{https://github.com/albietz/deep_shallow_kernel}.

\paragraph{Synthetic experiments.}
We consider randomly sampled inputs on the sphere~$\Sbb^3$ in 4~dimensions, and outputs generated according to the following target models, for an arbitrary~$w \in \Sbb^3$:
$f_1^*(x) = \1\{w^{\top}x \geq 0.7\}$ and $f_2^*(x) = e^{-(1 - w^\top x)^{3/2}} + e^{-(1+w^\top x)^{3/2}}$.
Note that~$f_1^*$ is discontinuous and thus not in the RKHS in general, while~$f_2^*$ is in the RKHS of~$\kappa_1$ (since it is even and has the same decay as~$\kappa_1$ as discussed in Section~\ref{sub:extensions}).
In Figure~\ref{fig:synthetic} we compare the quality of approximation for different kernels by examining generalization performance of ridge regression with exact kernels or random features.
The regularization parameter~$\lambda$ is optimized on 10\,000 test datapoints on a logarithmic grid.
In order to illustrate the difficulty of optimization due to a small optimal~$\lambda$, which would also indicate slower convergence with gradient methods, we consider grids with~$\lambda \geq \lambda_{\min}$, for two different choices of~$\lambda_{\min}$.
We see that all kernels provide a similar rate of approximation for a large enough grid, but when fixing a smaller optimization budget by taking a larger~$\lambda_{\min}$, the NTK and Laplace kernels can achieve better performance for large sample size~$n$, thanks to a slower eigenvalue decay of the covariance operator.
Figure~\ref{fig:synthetic}(right) shows that when using~$m = \sqrt{n}$ random features~\citep[which can achieve optimal rates in some settings, see][]{rudi2017generalization}, the ``shallow'' ReLU network performs better than a three-layer version, despite having fewer weights.
This suggests that in addition to providing no improvements to approximation in the infinite-width case, the kernel regimes for deep ReLU networks may even be worse than their two-layer counterparts in the finite-width~setting.

\begin{figure}[tb]
	\centering
	\includegraphics[width=.32\textwidth]{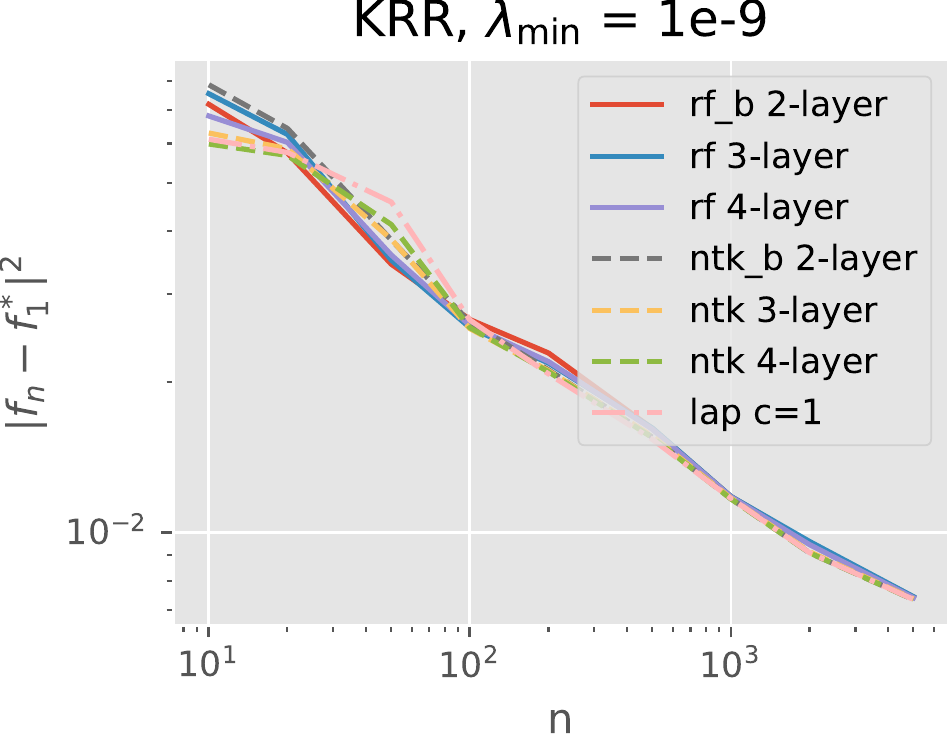}
	\includegraphics[width=.343\textwidth]{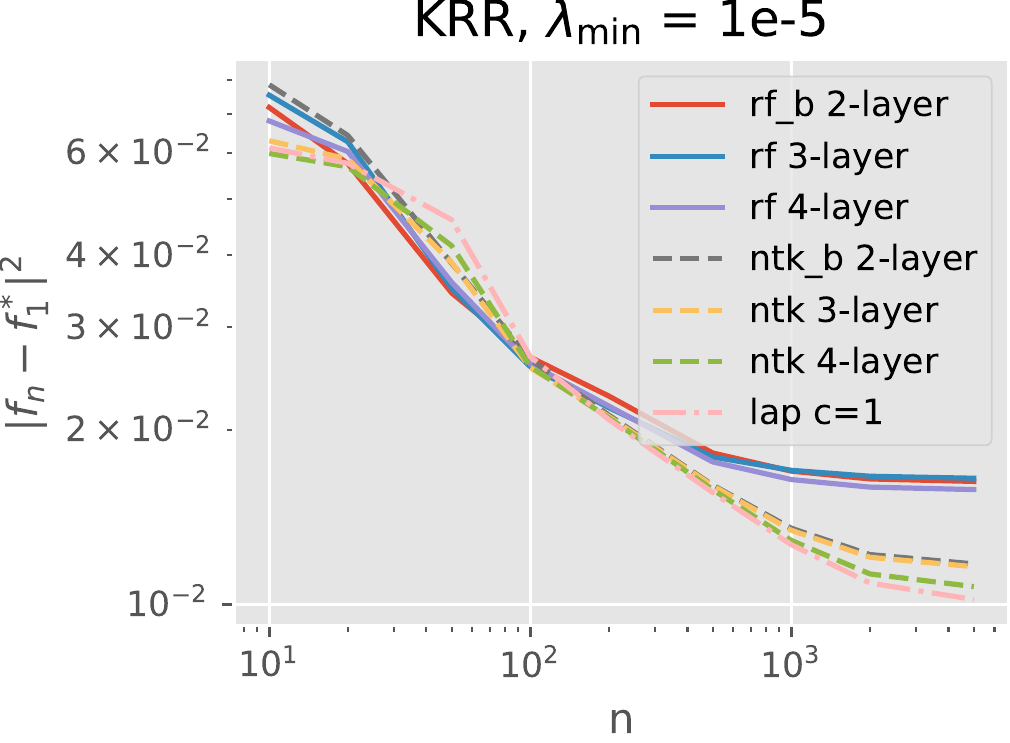}
	\includegraphics[width=.32\textwidth]{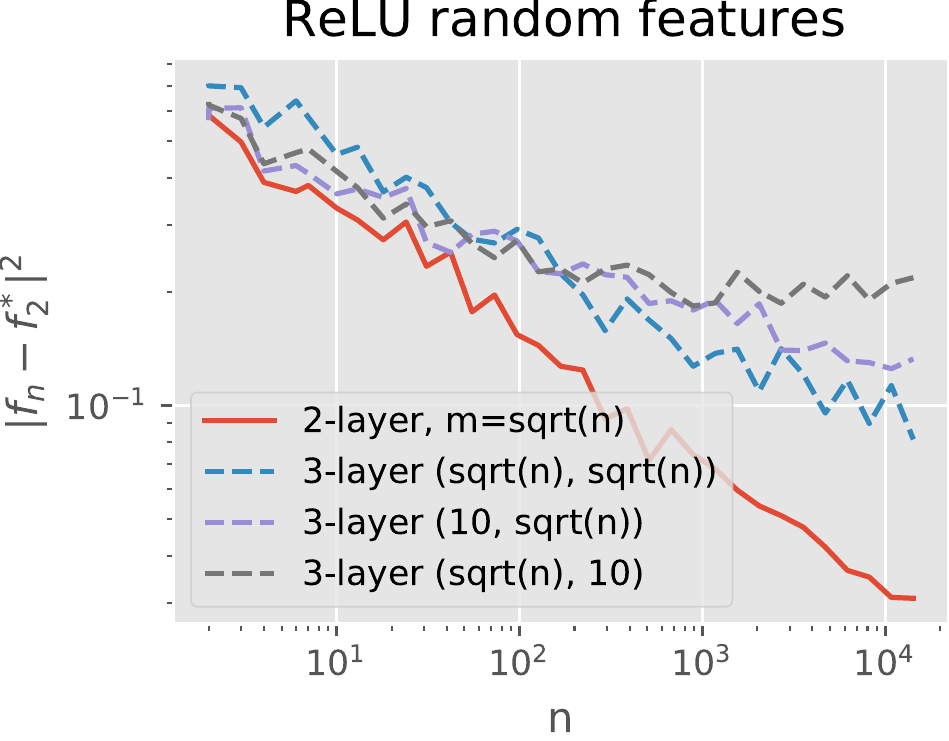}
	\caption{(left, middle) expected squared error vs sample size~$n$ for kernel ridge regression estimators with different kernels on~$f_1^*$ and with two different budgets on optimization difficulty~$\lambda_{\min}$ (the minimum regularization parameter allowed). (right) ridge regression with one or two layers of random ReLU features on~$f_2^*$, with different scalings of the number of ``neurons'' at each layer in terms of~$n$.}
	\label{fig:synthetic}
\end{figure}

\paragraph{MNIST and Fashion-MNIST.}
In Table~\ref{tab:mnist_acc}, we consider the image classification datasets MNIST and Fashion-MNIST, which both consist of 60k training and 10k test images of size 28x28 with 10 output classes.
We evaluate one-versus-all classifiers obtained by using kernel ridge regression by setting~$y=0.9$ for the correct label and~$y=-0.1$ otherwise.
We train on random subsets of 50k examples and use the remaining 10k examples for validation.
We find that test accuracy is comparable for different numbers of layers in RF or NTK kernels, with a slightly poorer performance for the two-layer case likely due to parity constraints, in agreement with our theoretical result that the decay is the same for different~$L$.
There is a small decrease in accuracy for growing~$L$, which may reflect changes in the decay constants or numerical errors when composing kernels.
The slightly better performance of RF compared to NTK may suggest that these problems are relatively easy (\eg, the regression function is smooth), so that a faster decay is preferable due to better adaptivity to smoothness.

\begin{table}[t]
\caption{Test accuracies on MNIST (left) and Fashion-MNIST (right) for RF and NTK kernels with varying numbers of layers~$L$.
We use kernel ridge regression on 50k samples, with~$\lambda$ optimized on a validation set of size 10k, and report mean and standard errors across 5 such random splits of the 60k training samples.
For comparison, the Laplace kernel with~$c=1$ yields accuracies $98.39 \pm 0.02$ on MNIST and $90.38 \pm 0.06$ on F-MNIST.}
\label{tab:mnist_acc}
\centering

MNIST \hspace{4cm} F-MNIST
\vspace{0.1cm}
\small

\begin{tabular}{ | c |  c |  c |  }
\hline
L &  RF & NTK \\ \hline
2  & 98.60 $\pm$ 0.03 
 & 98.49 $\pm$ 0.02 
\\ 
3  & 98.67 $\pm$ 0.03 
 & 98.53 $\pm$ 0.02 
\\ 
4  & 98.66 $\pm$ 0.02 
 & 98.49 $\pm$ 0.01 
\\ 
5  & 98.65 $\pm$ 0.04 
 & 98.46 $\pm$ 0.02 
\\ 
\hline
\end{tabular}
~~
\begin{tabular}{ | c |  c |  c |  }
\hline
L &  RF & NTK \\ \hline
2  & 90.75 $\pm$ 0.11 
 & 90.65 $\pm$ 0.07 
\\ 
3  & 90.87 $\pm$ 0.16 
 & 90.62 $\pm$ 0.08 
\\ 
4  & 90.89 $\pm$ 0.13 
 & 90.55 $\pm$ 0.07 
\\ 
5  & 90.88 $\pm$ 0.08 
 & 90.50 $\pm$ 0.05 
\\ 
\hline
\end{tabular}

\end{table}

\section{Discussion}

In this paper, we have analyzed the approximation properties of deep networks in kernel regimes, by studying eigenvalue decays of integral operators through differentiability properties of the kernel function.
In particular, the decay is governed by the form of the function's (generalized) power series expansion around~$\pm 1$, which remains the same for kernels arising from fully-connected ReLU networks of varying depths.
This result suggests that the kernel approach is unsatisfactory for understanding the power of depth in fully-connected networks.
In particular, it highlights the need to incorporate other regimes in the study of deep networks, such as the mean field regime~\citep{chizat2018global,mei2018mean}, and other settings with hierarchical structure~\citep[see, \eg,][]{allen2020backward,chen2020towards}.
We note that our results do not rule out benefits of depth for other network architectures in kernel regimes; for instance, depth may improve stability properties of convolutional kernels~\citep{bietti2019group,bietti2019inductive}, and a precise study of approximation for such kernels and its dependence on depth would also be of interest.

\subsubsection*{Acknowledgments}
The authors would like to thank David Holzmüller for finding an error in an earlier version of the paper, which led us to include the new assumption on differentiation of asymptotic expansions in Theorem~\ref{thm:decay}.
This work was funded in part by the French government under management of Agence Nationale
de la Recherche as part of the ``Investissements d’avenir'' program, reference ANR-19-P3IA-0001
(PRAIRIE 3IA Institute). We also acknowledge support of the European Research Council (grant
SEQUOIA 724063).

\bibliography{full,bibli}
\bibliographystyle{iclr2021_conference}

\appendix
\section{Background on Spherical Harmonics}
\label{sec:appx_background}

In this section, we provide some background on spherical harmonics needed for our study of approximation.
See~\citep{costas2014spherical,atkinson2012spherical,ismail2005classical} for references, as well as~\citep[Appendix D]{bach2017breaking}.
We consider inputs on the $d-1$ sphere~$\mathbb S^{\dmone} = \{x \in \R^d, \|x\| = 1\}$.

We recall some properties of the spherical harmonics~$Y_{k,j}$ introduced in Section~\ref{sub:dp_kernel_approx}.
For~$j = 1, \ldots, N(d, k)$, where~$N(d,k) = \frac{2k + d - 2}{k} {k + d - 3 \choose d - 2}$, the spherical harmonics~$Y_{k,j}$ are homogeneous harmonic polynomials of degree~$k$ that are orthonormal with respect to the uniform distribution~$\tau$ on the~$\dmone$ sphere.
The degree~$k$ plays the role of an integer frequency, as in Fourier series, and the collection~$\{Y_{k,j}, k \geq 0, j = 1, \ldots, N(d,k)\}$ forms an orthonormal basis of~$L^2(\Sbb^{\dmone}, d\tau)$.
As with Fourier series, there are tight connections between decay of coefficients in this basis w.r.t.~$k$, and regularity/differentiability of functions, in this case differentiability on the sphere.
This follows from the fact that spherical harmonics are eigenfunctions of the Laplace-Beltrami operator on the sphere~$\Delta_{\Sbb^{d-1}}$~\citep[see][Proposition 4.5]{costas2014spherical}:
\begin{equation}
\label{eq:laplace_beltrami}
\Delta_{\Sbb^{d-1}} Y_{k,j} = -k(k+d-2)Y_{k,j}.
\end{equation}

For a given frequency~$k$, we have the following addition formula:
\begin{equation}
\label{eq:spherical_addition}
\sum_{j=1}^{N(d, k)} Y_{k,j}(x) Y_{k,j}(y) = N(d, k) P_k( x^\top y ),
\end{equation}
where~$P_k$ is the $k$-th Legendre polynomial in dimension~$d$ (also known as Gegenbauer polynomial when using a different scaling),
given by the Rodrigues formula:
\begin{equation}
\label{eq:rodrigues}
P_k(t) = (-1/2)^k \frac{\Gamma(\frac{d-1}{2})}{\Gamma(k + \frac{d-1}{2})} (1 - t^2)^{(3-d)/2}
	\left(\frac{d}{dt}\right)^k (1 - t^2)^{k+(d-3)/2}.
\end{equation}
Note that these may also be expressed using the hypergeometric function~$\pFq{2}{1}$~\citep[see, \eg,][Section 4.5]{ismail2005classical}, an expression we will use in proof of Theorem~\ref{thm:decay} (see the proof of Lemma~\ref{lemma:phi_nu}).

The polynomials~$P_k$ are orthogonal in~$L^2([-1, 1], d\nu)$ where the measure $d \nu$ is given by the weight function
$d \nu(t) = (1 - t^2)^{(d-3)/2}dt$, and we have
\begin{equation}
\label{eq:legendre_norm}
\int_{-1}^1 P_k^2(t) (1 - t^2)^{(d-3)/2}dt = \frac{\omega_{d-1}}{\omega_{d-2}} \frac{1}{N(d,k)},
\end{equation}
where~$\omega_{p-1} = \frac{2 \pi^{p/2}}{\Gamma(p/2)}$ denotes the surface of the sphere~$\mathbb S^{p-1}$ in~$p$ dimensions.
Using the addition formula~\eqref{eq:spherical_addition} and orthogonality of spherical harmonics, we can show
\begin{equation}
\label{eq:legendre_dp}
\int P_j( w^\top x ) P_k( w^\top y ) d \tau(w) = \frac{\delta_{jk}}{N(d,k)} P_k( x^\top y )
\end{equation}
We will use two other properties of Legendre polynomials, namely the following recurrence relation~\cite[Eq. 4.36]{costas2014spherical}
\begin{equation}
\label{eq:legendre_rec}
t P_k(t) = \frac{k}{2k + d - 2} P_{\kmone}(t) + \frac{k + d - 2}{2k + d - 2} P_{k+1}(t),
\end{equation}
for $k \geq 1$, and for $k = 0$ we simply have $t P_0(t) = P_1(t)$, as well as the differential equation
~\citep[see, \eg,][Proposition 4.20]{costas2014spherical}:
\begin{equation}
\label{eq:pk_ode}
(1 - t^2) P_k''(t) + (1 - d) t P_k'(t) + k(k + d - 2) P_k(t) = 0.
\end{equation}

The Funk-Hecke formula is helpful for computing Fourier coefficients in the basis of spherical harmonics in terms of
Legendre polynomials: for any~$j = 1, \ldots, N(d, k)$, we have
\begin{equation}
\label{eq:funk_hecke}
\int f(x^\top y) Y_{k,j}(y) d \tau(y) = \frac{\omega_{d-2}}{\omega_{d-1}} Y_{k,j}(x) \int_{-1}^1 f(t) P_k(t) (1 - t^2)^{(d-3)/2} dt.
\end{equation}
For example, we may use this to obtain decompositions of dot-product kernels by computing Fourier coefficients of functions~$\kappa(\langle x, \cdot \rangle)$.
Indeed, denoting
\begin{equation*}
\mu_k = \frac{\omega_{d-2}}{\omega_{d-1}} \int_{-1}^1 \kappa(t) P_k(t) (1 - t^2)^{(d-3)/2}dt,
\end{equation*}
writing the decomposition of~$\kappa(\langle x, \cdot \rangle)$ using~\eqref{eq:funk_hecke} leads to the following Mercer decomposition of the kernel:
\begin{equation}
\label{eq:mercer}
\kappa(x^\top y) = \sum_{k=0}^\infty \mu_k \sum_{j=1}^{N(d,k)} Y_{k,j}(x) Y_{k,j}(y) = \sum_{k=0}^\infty \mu_k N(d,k) P_k(x^\top y).
\end{equation}

\section{Proof of Theorem~\ref{thm:decay}}
\label{sec:appx_thm_proof}

The proof of Theorem~\ref{thm:decay}, stated below in full as Theorem~\ref{thm:decay_full}, proceeds as follows.
We first derive an upper bound on the decay of~$\kappa$ of the form~$k^{-d-2 \nu + 3}$ (Lemma~\ref{lemma:diff_decay}), which is weaker than the desired~$k^{-d-2 \nu + 1}$, by exploiting regularity properties of~$\kappa$ through integration by parts.
The goal is then to apply this result on a function~$\tilde \kappa = \kappa - \psi$, where~$\psi$ is a function that allows us to ``cancel'' the leading terms in the expansions of~$\kappa$, while being simple enough that it allows a precise estimate of its decay.
In the proof of Theorem~\ref{thm:decay_full}, we follow this strategy by considering~$\psi$ as a sum of functions of the form~$t \mapsto (1 - t^2)^\nu$ and~$t \mapsto t(1 - t^2)^\nu$, for which we provide a precise computation of the decay in Lemma~\ref{lemma:phi_nu}.

\paragraph{Decay upper bound through regularity.}
We begin by establishing a weak upper bound on the decay of~$\kappa$ (Lemma~\ref{lemma:diff_decay}) by leveraging its regularity up to the terms of order~$(1 - t^2)^\nu$.
This is achieved by iteratively applying the following integration by parts lemma, which is conceptually similar to integrating by parts on the sphere by leveraging the spherical Laplacian relation~\eqref{eq:laplace_beltrami} in Appendix~\ref{sec:appx_background}, but directly uses properties of~$\kappa$ and of Legendre polynomials instead (namely, the differential equation~\eqref{eq:pk_ode}).
We note that the final statement in Theorem~\ref{thm:decay} on infinitely differentiable~$\kappa$ directly follows from Lemma~\ref{lemma:diff_decay}.

\begin{lemma}[Integration by parts lemma]
\label{lemma:ode_recursion}
Let~$\kappa: [-1,1] \to \R$ be a function that is~$C^\infty$ on~$(-1,1)$ and such that~$\kappa'(t) (1-t^2)^{1+\frac{d-3}{2}} = O(1)$. We have
\begin{align}
\int_{-1}^1 \kappa(t) P_k(t) (1-t^2)^{\frac{d-3}{2}} dt &= \frac{1}{k(k+d-2)} \Big( -\kappa(t)(1 - t^2)^{1 + \frac{d-3}{2}} P_k'(t) \Big|_{-1}^1 \\
	&\quad+ \kappa'(t) (1 - t^2)^{1 + \frac{d-3}{2}} P_k(t) \Big|_{-1}^1  + \int_{-1}^1 \tilde \kappa(t) P_k(t) (1 - t^2)^{(d-3)/2} dt \Big),
\end{align}
with~$\tilde \kappa(t) = -\kappa''(t)(1 - t^2) + (d-1)t \kappa'(t)$.
\end{lemma}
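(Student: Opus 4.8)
The plan is to obtain the identity from two integrations by parts, after rewriting the Legendre equation~\eqref{eq:pk_ode} in self-adjoint (Sturm--Liouville) form. Since $1+\frac{d-3}{2} = \frac{d-1}{2}$, a one-line computation shows that \eqref{eq:pk_ode} is equivalent to
\begin{equation*}
\frac{d}{dt}\Big[(1-t^2)^{1+\frac{d-3}{2}}P_k'(t)\Big] = -k(k+d-2)\,(1-t^2)^{\frac{d-3}{2}} P_k(t).
\end{equation*}
Substituting this for the factor $(1-t^2)^{\frac{d-3}{2}}P_k(t)$ inside the integral on the left-hand side rewrites it as $-\frac{1}{k(k+d-2)}\int_{-1}^1 \kappa(t)\,\frac{d}{dt}\big[(1-t^2)^{\frac{d-1}{2}}P_k'(t)\big]\,dt$, which is the starting point for the two integrations by parts.

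First I would integrate by parts once, with $u=\kappa$ and $dv = d\big[(1-t^2)^{\frac{d-1}{2}}P_k'(t)\big]$; this produces the boundary term $-\kappa(t)(1-t^2)^{1+\frac{d-3}{2}}P_k'(t)\big|_{-1}^1$ (after the prefactor) and the new integral $\frac{1}{k(k+d-2)}\int_{-1}^1 \kappa'(t)(1-t^2)^{\frac{d-1}{2}}P_k'(t)\,dt$. Then I would integrate by parts a second time, now with $u=\kappa'(t)(1-t^2)^{\frac{d-1}{2}}$ and $dv = P_k'(t)\,dt$, so $v=P_k(t)$; applying the product rule to $u$ gives $\kappa''(t)(1-t^2)^{\frac{d-1}{2}} - (d-1)t\,\kappa'(t)(1-t^2)^{\frac{d-3}{2}}$, and after factoring $(1-t^2)^{\frac{d-3}{2}}$ out of the resulting integrand one recognizes $\tilde\kappa(t)=-\kappa''(t)(1-t^2)+(d-1)t\,\kappa'(t)$. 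Collecting the boundary term $\kappa'(t)(1-t^2)^{1+\frac{d-3}{2}}P_k(t)\big|_{-1}^1$ and the integral $\int_{-1}^1 \tilde\kappa(t)P_k(t)(1-t^2)^{\frac{d-3}{2}}\,dt$, and reassembling the three pieces, yields the claimed formula.

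The algebra above is routine; the main obstacle is justifying each step near the endpoints $\pm1$, where $\kappa$, $\kappa'$ and $\kappa''$ may blow up. The hypothesis $\kappa'(t)(1-t^2)^{1+\frac{d-3}{2}}=O(1)$ is exactly what makes the first integration by parts legitimate: it guarantees that $\kappa'(t)(1-t^2)^{\frac{d-1}{2}}P_k'(t)$ is bounded, hence that the intermediate integral converges, and that the boundary term $\kappa'(t)(1-t^2)^{\frac{d-1}{2}}P_k(t)$ is finite at $\pm1$. For the first boundary term one uses $\kappa(t)=\kappa(0)+\int_0^t\kappa'$ together with the $O(1)$ bound on $\kappa'(t)(1-t^2)^{\frac{d-1}{2}}$ to check that $\kappa(t)(1-t^2)^{\frac{d-1}{2}}$ stays bounded (indeed tends to $0$) at the endpoints. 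The last point to verify is that the final integral against $\tilde\kappa$ converges; this is cleanest to see from the observation $\tilde\kappa(t)(1-t^2)^{\frac{d-3}{2}} = -\frac{d}{dt}\big[\kappa'(t)(1-t^2)^{\frac{d-1}{2}}\big]$, which reduces integrability to control of the variation of the bounded function $\kappa'(t)(1-t^2)^{\frac{d-1}{2}}$ --- something available for all the $\kappa$'s considered here, whose endpoint behavior is governed by the asymptotic expansions assumed in Theorem~\ref{thm:decay}. This endpoint bookkeeping, rather than the integration by parts itself, is where the real care is needed.
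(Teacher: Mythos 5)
Your proof is correct and is essentially the paper's argument: both rest on the Legendre differential equation~\eqref{eq:pk_ode} followed by two integrations by parts, your Sturm--Liouville rewriting being exactly the cancellation the paper carries out explicitly via the product-rule computation in~\eqref{eq:ipp_diff}. Your extra endpoint bookkeeping is consistent with (and somewhat more careful than) the paper's treatment, so no gap.
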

\begin{proof}
In order to perform integration by parts, we use the following differential equation satisfied by Legendre polynomials~\citep[see, \eg,][Proposition 4.20]{costas2014spherical}:
\begin{equation}
(1 - t^2) P_k''(t) + (1 - d) t P_k'(t) + k(k + d - 2) P_k(t) = 0.
\end{equation}
Using this equation, we may write for~$k \geq 1$,
\begin{align}
\label{eq:pk_ipp}
\int_{-1}^1 \kappa(t) P_k(t) (1 - t^2)^{(d-3)/2} dt 
	&= \frac{1}{k(k+d-2)} \Big( (d-1) \int t \kappa(t) P_k'(t) (1 - t^2)^{\frac{d-3}{2}} dt \\
		&\qquad- \int \kappa(t) P_k''(t) (1 - t^2)^{1 + \frac{d-3}{2}}dt \Big).
\end{align}
We may integrate the second term by parts using
\begin{align}
\frac{d}{dt} \left( \kappa(t) (1 - t^2)^{1 + \frac{d-3}{2}} \right)
	&= \kappa'(t) (1 - t^2)^{1 + \frac{d-3}{2}} - 2t(1 + (d-3)/2) \kappa(t) (1 - t^2)^{\frac{d-3}{2}} \nonumber\\
	&= \kappa'(t) (1 - t^2)^{1 + \frac{d-3}{2}} - (d-1) t \kappa(t)(1 - t^2)^{\frac{d - 3}{2}} \label{eq:ipp_diff}.
\end{align}
Noting that the first term in~\eqref{eq:pk_ipp} cancels out with the integral resulting from the second term in~\eqref{eq:ipp_diff}, we then obtain
\begin{align*}
\int_{-1}^1 \kappa(t) P_k(t) (1 - t^2)^{(d-3)/2} dt &= \frac{1}{k(k+d-2)} \Big( -\kappa(t)(1 - t^2)^{1 + \frac{d-3}{2}} P_k'(t) \Big|_{-1}^1  \\
	&\qquad + \int_{-1}^1 \kappa'(t) (1 - t^2)^{1 + \frac{d-3}{2}} P_k'(t) dt \Big).
\end{align*}
Integrating by parts once more, the second term becomes
\begin{align}
\int_{-1}^1 \kappa'(t) (1 - t^2)^{1 + \frac{d-3}{2}} P_k'(t) dt &= \kappa'(t) (1 - t^2)^{1 + \frac{d-3}{2}} P_k(t) \Big|_{-1}^1 \nonumber \\
	&\quad- \int_{-1}^1 (\kappa''(t)(1 - t^2) - (d-1)t \kappa'(t)) P_k(t) (1 - t^2)^{(d-3)/2} dt.
\end{align}
The desired result follows.
\end{proof}

\begin{lemma}[Weak upper bound on the decay]
\label{lemma:diff_decay}
Let~$\kappa: [-1,1] \to \R$ be a function that is~$C^\infty$ on~$(-1,1)$ and has the following expansions around~$\pm 1$ on its derivatives:
\begin{align}
\kappa^{(j)}(t) &= p_{j,1}(1-t) + O((1-t)^{\nu-j}) \\
\kappa^{(j)}(t) &= p_{j,-1}(1+t) + O((1+t)^{\nu-j}), 
\end{align}
for~$t \in [-1, 1]$ and $j \geq 0$, where~$p_{j,1}, p_{j,-1}$ are polynomials and~$\nu$ may be non-integer.
Then the Legendre coefficients~$\mu_k(\kappa)$ of~$\kappa$ given in~\eqref{eq:mu_k} satisfy
\begin{equation}
\mu_k(\kappa) = O(k^{-d-2 \nu + 3}).
\end{equation}
\end{lemma}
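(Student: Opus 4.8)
The natural approach is to iterate the integration-by-parts identity of Lemma~\ref{lemma:ode_recursion}. Write $\beta := (d-1)/2 > 0$, so that the weight occurring in the boundary terms of that lemma is $(1-t^2)^{1+\frac{d-3}{2}} = (1-t^2)^{\beta}$, and let $L$ denote the operator $L[\kappa](t) := -\kappa''(t)(1-t^2) + (d-1)\,t\,\kappa'(t)$, i.e.\ $\tilde\kappa = L[\kappa]$ in the notation of Lemma~\ref{lemma:ode_recursion}. Set $\kappa_0 := \kappa$ and $\kappa_j := L[\kappa_{j-1}] = L^{j}[\kappa]$, and let $m$ be the smallest integer with $m \ge \nu + \beta - 1$, so that $\nu + \beta - 1 \le m < \nu + \beta$. (If $\nu + \beta \le 1$ then $m = 0$, no integration by parts is needed, and the claimed exponent $-d - 2\nu + 3 \ge 0$ makes the bound trivial from $|\mu_k(\kappa)| \le \frac{\omega_{d-2}}{\omega_{d-1}} \int_{-1}^{1} |\kappa(t)|(1-t^2)^{(d-3)/2}\,dt < \infty$; so assume $m \ge 1$ henceforth.)

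First I would prove, by induction on $j$ for $0 \le j \le m$, that $\kappa_j$ is $C^\infty$ on $(-1,1)$ and admits endpoint expansions $\kappa_j(t) = q_{j,+}(1-t) + O\bigl((1-t)^{\nu-j}\bigr)$ near $+1$ and $\kappa_j(t) = q_{j,-}(1+t) + O\bigl((1+t)^{\nu-j}\bigr)$ near $-1$, with $q_{j,\pm}$ polynomials, and similarly for the derivatives of $\kappa_j$ (obtained by term-by-term differentiation of these expansions). The base case $j=0$ is exactly the hypothesis of Lemma~\ref{lemma:diff_decay}. In the inductive step, $L$ multiplies the $\kappa_{j-1}''$-term by $(1-t^2)$, which is smooth and vanishes to first order at $\pm1$, so it raises the exponent of the remainder by one while double differentiation lowers it by two, for a net loss of exactly one per application; and $L$ maps polynomials to polynomials. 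Consequently, for $0 \le j \le m-1$, both $\kappa_j(t)(1-t^2)^{\beta}$ and $\kappa_j'(t)(1-t^2)^{\beta}$ tend to $0$ as $t \to \pm 1$: their leading behaviours near $\pm1$ are $(1\mp t)^{\nu-j+\beta}$ and $(1\mp t)^{\nu-j-1+\beta}$, whose exponents are positive since $j \le m-1 < \nu+\beta-1$, while the polynomial parts are multiplied by the vanishing factor $(1-t^2)^{\beta}$. In particular $\kappa_j'(t)(1-t^2)^{\beta} = O(1)$, so Lemma~\ref{lemma:ode_recursion} applies to $\kappa_j$ for each $0 \le j \le m-1$.

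Next I would apply Lemma~\ref{lemma:ode_recursion} $m$ times, starting from $\mu_k(\kappa) = \frac{\omega_{d-2}}{\omega_{d-1}} \int_{-1}^{1} \kappa(t) P_k(t) (1-t^2)^{(d-3)/2}\,dt$ (for $k \ge 1$), and show that every boundary term produced vanishes. Each boundary term is, up to a constant, of the form $\kappa_j(t)(1-t^2)^{\beta} P_k'(t)\big|_{-1}^{1}$ or $\kappa_j'(t)(1-t^2)^{\beta} P_k(t)\big|_{-1}^{1}$ with $0 \le j \le m-1$; since $P_k(\pm1)$ and $P_k'(\pm1)$ are finite (the value $P_k'(1) = k(k+d-2)/(d-1)$ follows from~\eqref{eq:pk_ode}, and likewise at $-1$), while $\kappa_j(t)(1-t^2)^{\beta}$ and $\kappa_j'(t)(1-t^2)^{\beta}$ tend to $0$ at $\pm1$ by the induction, every such term is $0$. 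Collecting the factors $\tfrac{1}{k(k+d-2)}$ from the $m$ applications yields
\[
\mu_k(\kappa) = \frac{\omega_{d-2}}{\omega_{d-1}}\,\bigl(k(k+d-2)\bigr)^{-m} \int_{-1}^{1} \kappa_m(t)\,P_k(t)\,(1-t^2)^{(d-3)/2}\,dt .
\]

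Finally, since $m < \nu + \beta = \nu + (d-1)/2$, the endpoint exponent $\nu - m + \tfrac{d-3}{2} > -1$, so $\kappa_m$ is integrable against the weight $(1-t^2)^{(d-3)/2}$; together with the standard bound $|P_k(t)| \le P_k(1) = 1$ on $[-1,1]$ (valid for the normalization~\eqref{eq:rodrigues}), the integral above is bounded uniformly in $k$, whence $\mu_k(\kappa) = O(k^{-2m})$. Since $2m \ge 2(\nu + \beta - 1) = d + 2\nu - 3$, this gives $\mu_k(\kappa) = O(k^{-d-2\nu+3})$, as claimed. The step I expect to be the main obstacle is the bookkeeping in the induction of the second paragraph: verifying that each application of $L$ lowers the remainder order by exactly one and preserves the ``polynomial plus power-law remainder'' form at both endpoints, together with the attendant check that every boundary term genuinely vanishes --- which is precisely what forces the choice of $m$ relative to $\nu$ and $d$. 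The remaining ingredients (the $L^1$ estimate, the bound $|P_k| \le 1$, and assembling the powers of $k(k+d-2)$) are routine.
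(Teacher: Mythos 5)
Your proposal is correct and follows essentially the same route as the paper's proof: iterating the integration-by-parts identity of Lemma~\ref{lemma:ode_recursion} exactly $\lceil \nu + \tfrac{d-3}{2}\rceil$ times, tracking that each application of the operator lowers the endpoint expansion order by one, and bounding the final integral uniformly in $k$ via $|P_k|\le 1$. The only (inessential) difference is that you show the last boundary term vanishes, while the paper simply bounds it by $O(1)$, which suffices equally.
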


\begin{proof}
Let~$f_0 := \kappa$ and for~$j \geq 1$
\begin{align}
f_j(t) := -f_{j-1}''(t)(1 - t^2) + (d-1) f_{j-1}'(t).
\end{align}
Then~$f_j$ is~$C^\infty$ on~$(-1,1)$ and has similar expansions to~$\kappa$ of the form
\begin{align}
f_j(t) &= q_{j,1}(1-t) + O((1-t)^{\nu-j}) \\
f_j(t) &= q_{j,-1}(1+t) + O((1+t)^{\nu-j}),
\end{align}
for some polynomials~$q_{j,\pm 1}$.
We may apply Lemma~\ref{lemma:ode_recursion} repeatedly as long as the terms in brackets vanish, until we obtain, for~$j = \lceil \nu + \frac{d-3}{2} \rceil - 1$,
\begin{align*}
\int_{-1}^1 &\kappa(t) P_k(t) (1 - t^2)^{(d-3)/2} dt \\
	&= \frac{1}{(k(k+d-2))^{j+1}}
\left( f_j'(t) (1 - t^2)^{1 + \frac{d-3}{2}} P_k(t) \Big|_{-1}^1 + \int_{-1}^1 f_{j+1}(t) P_k(t) (1 - t^2)^{(d-3)/2} dt \right).
\end{align*}
Given our choice for~$j$, we have $f_j'(t) (1 - t^2)^{1 + \frac{d-3}{2}} = O(1)$, and $f_{j+1}(t) (1 - t^2)^{(d-3)/2} = O((1-t^2)^{-1+ \epsilon})$ for some~$\epsilon > 0$. Since~$P_k(t) \in [-1, 1]$ for any~$t \in [-1, 1]$, we obtain $\mu_k(\kappa) = O(k^{-2(j+1)}) = O(k^{-d-2 \nu+3})$.
\end{proof}

\paragraph{Precise decay for simple function.}
We now provide precise decay estimates for functions of the form~$t \mapsto (1 - t^2)^\nu$ and~$t \mapsto t(1 - t^2)^\nu$, which will lead to the dominant terms in the decomposition of~$\kappa$ in the main theorem.

\begin{lemma}[Decay for simple functions~$\phi_\nu$ and~$\bar \phi_\nu$]
\label{lemma:phi_nu}
Let $\phi_\nu(t) = (1 - t^2)^\nu$, with~$\nu > 0$ non-integer, and let~$\mu_k(\phi_\nu)$ denote its Legendre coefficients in~$d$ dimensions given by~$\frac{\omega_{d-2}}{\omega_{d-1}}\int_{-1}^1 (1 - t^2)^{\nu + (d-3)/2} P_k(t) dt$.
We have
\begin{itemize}
	\item $\mu_k(\phi_\nu) = 0$ if~$k$ is odd
	\item $\mu_k(\phi_\nu) \sim C(d,\nu) k^{-d-2 \nu - 1}$ for~$k$ even, $k \to \infty$, with~$C(d,\nu)$ a constant.
\end{itemize}
Analogously, let $\bar \phi_\nu(t) := t(1 - t^2)^{\nu}$. We have
\begin{itemize}
	\item $\mu_k(\bar \phi_\nu) = 0$ if~$k$ is even
	\item $\mu_k(\bar \phi_\nu) \sim C(d,\nu) k^{-d-2 \nu - 1}$ for~$k$ odd, $k \to \infty$, with~$C(d,\nu)$ a constant.
\end{itemize}

\end{lemma}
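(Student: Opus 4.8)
The plan is to compute the Legendre coefficients $\mu_k(\phi_\nu)$ explicitly using the Rodrigues formula~\eqref{eq:rodrigues}, reducing the integral to a Beta-function-type expression, and then extract its asymptotics in $k$. First I would handle the parity claims: since $\phi_\nu(t) = (1-t^2)^\nu$ is an even function and $P_k$ has parity $(-1)^k$, the integrand $\phi_\nu(t) P_k(t)(1-t^2)^{(d-3)/2}$ is odd when $k$ is odd, so $\mu_k(\phi_\nu) = 0$ for odd $k$; symmetrically $\bar\phi_\nu(t) = t(1-t^2)^\nu$ is odd, so $\mu_k(\bar\phi_\nu) = 0$ for even $k$. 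This disposes of the two easy bullets.

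For the even case, I would substitute the Rodrigues formula into $\mu_k(\phi_\nu) = \frac{\omega_{d-2}}{\omega_{d-1}}\int_{-1}^1 (1-t^2)^{\nu+(d-3)/2} P_k(t)\,dt$ and integrate by parts $k$ times to move all $k$ derivatives off $(1-t^2)^{k+(d-3)/2}$ and onto the factor $(1-t^2)^{\nu+(d-3)/2}$; the boundary terms vanish at each stage because $(1-t^2)^{k+(d-3)/2}$ and its first $k-1$ derivatives vanish at $\pm 1$ (as $\nu + (d-3)/2 > -1$ keeps everything integrable). This yields a closed-form expression of the shape $\mu_k(\phi_\nu) = c_{d,\nu,k}\int_{-1}^1 (1-t^2)^{k + \nu + (d-3)/2}\,dt$ times a ratio of Gamma functions coming from differentiating the power $\nu + (d-3)/2$ exactly $k$ times, i.e. involving $\frac{\Gamma(\nu + (d-1)/2)}{\Gamma(\nu + (d-1)/2 - k)}$ (equivalently the Pochhammer symbol $(\nu+(d-1)/2)_{-k}$), and the remaining integral equals a Beta value $B(k+\nu+(d-1)/2,\ 1/2)$. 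Alternatively, and perhaps cleaner for bookkeeping, I would invoke the known hypergeometric representation of $P_k$ (referenced in the excerpt via~\citep{ismail2005classical}) and recognize the coefficient integral as a terminating $\pFq{2}{1}$ evaluated by Gauss's formula; the paper's remark about using $\pFq{2}{1}$ in this proof suggests that is the intended route.

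The remaining step is the asymptotic extraction: having $\mu_k(\phi_\nu)$ as an explicit product/ratio of Gamma functions in $k$, I would apply Stirling's formula (in the ratio form $\Gamma(k+a)/\Gamma(k+b) \sim k^{a-b}$ as $k\to\infty$) to each factor. The $N(d,k)$-type normalizations, the factor $(-1/2)^k$ from Rodrigues, and the $\Gamma$ ratios should combine so that the exponential-in-$k$ pieces cancel, leaving a pure power $k^{-d-2\nu-1}$ with an explicit constant $C(d,\nu)$ (expressible in terms of $\Gamma$ functions of $d$ and $\nu$). For $\bar\phi_\nu$, I would either repeat the computation with the extra factor $t$ — using the recurrence $tP_k(t) = \frac{k}{2k+d-2}P_{k-1}(t) + \frac{k+d-2}{2k+d-2}P_{k+1}(t)$ from~\eqref{eq:legendre_rec} to write $\mu_k(\bar\phi_\nu)$ as a combination of $\mu_{k-1}(\phi_\nu)$ and $\mu_{k+1}(\phi_\nu)$ with coefficients tending to $1/2$ — and conclude $\mu_k(\bar\phi_\nu) \sim C(d,\nu) k^{-d-2\nu-1}$ for odd $k$ with the same leading constant up to the $1/2 + 1/2 = 1$ bookkeeping. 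The main obstacle I anticipate is purely computational: keeping the $\Gamma$-function constants straight through the $k$-fold integration by parts (or the $\pFq{2}{1}$ evaluation) and verifying that all the exponentially growing/decaying factors in $k$ genuinely cancel, so that the final exponent is exactly $-d-2\nu-1$ and the constant $C(d,\nu)$ is finite and nonzero (which requires $\nu$ non-integer so that the $\Gamma$ ratio does not terminate at zero).
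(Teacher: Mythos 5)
Your parity argument and your reduction of $\bar\phi_\nu$ to $\phi_\nu$ via the recurrence~\eqref{eq:legendre_rec} are correct and match the paper (the parity observation is in fact simpler than the paper's, which reads the vanishing for odd $k$ off the closed form). The gap is in the even case, at the step you dismiss as "purely computational". After inserting the Rodrigues formula~\eqref{eq:rodrigues} (note the prefactor $(1-t^2)^{(3-d)/2}$, which you dropped: the $k$ derivatives end up hitting $(1-t^2)^{\nu}$, not $(1-t^2)^{\nu+(d-3)/2}$) and integrating by parts $k$ times, you are left with $\int_{-1}^1 \frac{d^k}{dt^k}\bigl[(1-t^2)^{\nu}\bigr](1-t^2)^{k+(d-3)/2}\,dt$, and $\frac{d^k}{dt^k}(1-t^2)^{\nu}$ is \emph{not} a constant times a single power of $(1-t^2)$: by Leibniz applied to $(1-t)^{\nu}(1+t)^{\nu}$ it is a sum of $k+1$ terms. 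So you do not obtain "a Beta value times a Gamma ratio"; you obtain a terminating hypergeometric sum, and evaluating or estimating that sum is precisely the hard part of the lemma. The alternative route you sketch has the same problem: substituting the $\pFq{2}{1}$ representation of $P_k$ and integrating termwise introduces Beta factors that add an extra Pochhammer pair, turning the series into a terminating $\pFq{3}{2}$ at unit argument, not a $\pFq{2}{1}$, so Gauss's summation theorem does not apply. The crux of the paper's proof is recognizing that the parameters of this $\pFq{3}{2}$ fit exactly the hypotheses of Watson's theorem, which gives the closed Gamma-ratio form; without Watson's theorem (or an equivalent summation identity, or a separate asymptotic analysis of the sum) your plan stalls at this point.

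Moreover, you never actually perform the Stirling step: the exponent you announce is copied from the statement rather than derived. Carrying the computation through as in the paper, the surviving Gamma ratio is $\Gamma(\tfrac{k+1}{2})\Gamma(\tfrac{k}{2}-\nu)\big/\bigl(\Gamma(\tfrac{d+k-1}{2})\Gamma(\nu+\tfrac{k}{2}+\tfrac{d}{2})\bigr)\sim (k/2)^{-d-2\nu+1}$, so the decay is $k^{-d-2\nu+1}$; this is what the paper's proof obtains and what is needed in Theorem~\ref{thm:decay_full}, the exponent $-d-2\nu-1$ printed in the lemma statement being a typo. So your claimed verification that "the final exponent is exactly $-d-2\nu-1$" is both absent and, as stated, off by a factor $k^{2}$. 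The remaining ingredients of your plan are sound: the boundary terms in the $k$-fold integration by parts do vanish (each behaves like $(1-t^2)^{\nu+(d-1)/2}$ at $\pm1$), and the treatment of $\bar\phi_\nu$ through~\eqref{eq:legendre_rec} is exactly the paper's.
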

\begin{proof}
We recall the following representation of Legendre polynomials based on the hypergeometric function~\citep[\eg,][Section 4.5]{ismail2005classical}:\footnote{Here we normalize such that~$P_k(1)=1$ as is standard for Legendre polynomials, in contrast to~\citep{ismail2005classical} where the standard Jacobi/Gegenbauer normalization is used.}
\begin{equation}
P_k(t) = \pFq{2}{1}(-k,k+d-2;(d-1)/2;(1-t)/2),
\end{equation}
where the hypergeometric function is given in its generalized form by
\begin{equation}
\pFq{p}{q}(a_1, \ldots, a_p;b_1, \ldots, b_q;x)
= \sum_{s=0}^\infty \frac{(a_1)_s \cdots (a_p)_s}{(b_1)_s \cdots (b_q)_s} \frac{x^s}{s!},
\end{equation}
where~$(a)_s = \Gamma(a+s)/\Gamma(a)$ is the rising factorial or Pochhammer symbol.

Using the above definitions and the integral representation of Beta functions, we then have
\begin{align*}
\int_{-1}^1 (1-t^2)^{\nu + \frac{d-3}{2}} P_k(t) dt
	&= 2^{2 \nu + d - 3} \int_{-1}^1 \left(\frac{1-t}{2}\right)^{\nu + \frac{d-3}{2}} \left(\frac{1+t}{2}\right)^{\nu + \frac{d-3}{2}} P_k(t) dt \\
	&= 2^{2 \nu + d - 3} \sum_{s=0}^k \frac{(-k)_s (d-2+k)_s}{\left( \frac{d-1}{2}\right)_s s!} \int_{-1}^1 \left(\frac{1-t}{2}\right)^{\nu + \frac{d-3}{2} + s} \left(\frac{1+t}{2}\right)^{\nu + \frac{d-3}{2}} dt \\
	&= 2^{2 \nu + d - 2} \sum_{s=0}^k \frac{(-k)_s (d-2+k)_s}{\left( \frac{d-1}{2}\right)_s s!} \int_{0}^1 \left(1-x\right)^{\nu + \frac{d-3}{2} + s} x^{\nu + \frac{d-3}{2}} dx \\
	&= 2^{2 \nu + d - 2} \sum_{s=0}^k \frac{(-k)_s (d-2+k)_s}{\left( \frac{d-1}{2}\right)_s s!} \frac{\Gamma(\nu +s + \frac{d-1}{2}) \Gamma(\nu + \frac{d-1}{2})}{\Gamma(2 \nu + s + d - 1)} \\
	&= 2^{2 \nu + d - 2} \frac{\Gamma(\nu + \frac{d-1}{2})^2}{\Gamma(2 \nu + d - 1)} \sum_{s=0}^k \frac{(-k)_s (d-2+k)_s (\nu + \frac{d-1}{2})_s}{\left( \frac{d-1}{2}\right)_s (2 \nu + d - 1)_s s!} \\
	&= 2^{2 \nu + d - 2} \frac{\Gamma(\nu + \frac{d-1}{2})^2}{\Gamma(2 \nu + d - 1)} \pFq{3}{2}\left(\begin{matrix}
		-k,k+d-2, \nu + (d-1)/2 \\ (d-1)/2, 2 \nu + d - 1
	\end{matrix}\Big| 1 \right).
\end{align*}

Now, we use Watson's theorem~\citep[\eg,][Eq.~(1.4.12)]{ismail2005classical}, which states that
\begin{equation}
\pFq{3}{2}\left(\begin{matrix}
		a,b,c \\ (a + b + 1)/2, 2c
	\end{matrix}\Big|1 \right) = \frac{\Gamma(\frac{1}{2}) \Gamma(c + \frac{1}{2}) \Gamma(\frac{a+b+1}{2}) \Gamma(c + \frac{1 - a - b}{2})}{\Gamma(\frac{a+1}{2})\Gamma(\frac{b+1}{2}) \Gamma(c + \frac{1-a}{2})}.
\end{equation}

We remark that with~$a=-k, b=k+d-2, c=\nu + (d-1)/2$, our expression above is of the form of Watson's theorem, and we may thus evaluate~$\mu_k$ in closed form.
Indeed, we have
\begin{align}
\pFq{3}{2}\left(\begin{matrix}
		-k,k+d-2, \nu + (d-1)/2 \\ (d-1)/2, 2 \nu + d - 1
	\end{matrix}\Big| 1 \right)
	&= \frac{\Gamma(\frac{1}{2}) \Gamma(\nu + \frac{d}{2}) \Gamma(\frac{d-1}{2}) \Gamma(\nu + 1)}{\Gamma(\frac{1-k}{2}) \Gamma(\frac{d+k-1}{2}) \Gamma(\nu + \frac{k}{2} + \frac{d}{2}) \Gamma(\nu + 1 - \frac{k}{2})}.
\end{align}
When~$k$ is odd, then~$(1-k)/2$ is a non-positive integer so that the denominator is infinite and thus~$\mu_k$ vanishes. We assume from now on that~$k$ is even, making the denominator is finite.
Using the following relation, for~$\epsilon \notin \Z$ and an integer~$n$:
\begin{equation}
\frac{\Gamma(1+\epsilon)}{\Gamma(\epsilon - n)} = \epsilon(\epsilon-1) \cdots (\epsilon - n) = (-1)^{n-1} \frac{\Gamma(n+1- \epsilon)}{\Gamma(-\epsilon)},
\end{equation}
we may then rewrite
\begin{align}
\pFq{3}{2}\left(\begin{matrix}
		-k,k+d-2, \nu + (d-1)/2 \\ (d-1)/2, 2 \nu + d - 1
	\end{matrix}\Big| 1 \right)
	&= \frac{\Gamma(\nu + \frac{d}{2}) \Gamma(\frac{d-1}{2}) \Gamma(\nu + 1)}{\Gamma(-\frac{1}{2}) \Gamma(\nu + 2) \Gamma(-\nu - 1)} \frac{\Gamma(\frac{k + 1}{2}) \Gamma(\frac{k}{2} - \nu) }{\Gamma(\frac{d+k-1}{2}) \Gamma(\nu + \frac{k}{2} + \frac{d}{2})}.
\end{align}
When~$k \to \infty$, Stirling's formula~$\Gamma(x) \sim x^{x - \frac{1}{2}} e^{-x} \sqrt{2 \pi}$ yields the equivalent
\begin{equation}
\frac{\Gamma(\frac{k + 1}{2}) \Gamma(\frac{k}{2} - \nu) }{\Gamma(\frac{d+k-1}{2}) \Gamma(\nu + \frac{k}{2} + \frac{d}{2})} \sim \left(\frac{k}{2} \right)^{-d-2 \nu + 1}.
\end{equation}
This yields
\begin{equation}
\mu_k \sim C(d,\nu) k^{-d-2 \nu + 1},
\end{equation}
with
\begin{align}
C(d,\nu) &= 2^{2 \nu + d - 2} \frac{\omega_{d-2}}{\omega_{d-1}} \frac{\Gamma(\nu + \frac{d-1}{2})^2}{\Gamma(2 \nu + d - 1)} \frac{\Gamma(\nu + \frac{d}{2}) \Gamma(\frac{d-1}{2}) \Gamma(\nu + 1)}{\Gamma(-\frac{1}{2}) \Gamma(\nu + 2) \Gamma(-\nu - 1)} (1/2)^{-d-2 \nu + 1}.
\end{align}

\paragraph{Decay for~$\bar \phi_\nu$.}
The decay for~$\bar \phi_\nu$ follows from the decay of~$\phi_\nu$ and the recurrence relation~\citep[][Eq.~(4.36)]{costas2014spherical}
\begin{equation}
t P_k(t) = \frac{k}{2k + d - 2}P_{k-1}(t) + \frac{k+d - 2}{2k + d - 2} P_{k+1}(t),
\end{equation}
which ensures the same decay with a change parity.
\end{proof}

\paragraph{Final theorem.}
We are now ready to prove our main theorem, which differs from the simplified statement of Theorem~\ref{thm:decay} by the technical assumption that only a finite number~$r$ of terms of order between~$\nu$ and~$\nu + 1$ are present in the series expansions around~$\pm 1$.

\begin{theorem}[Main theorem, full version]
\label{thm:decay_full}
Let~$\kappa: [-1,1] \to \R$ be a function that is~$C^\infty$ on~$(-1,1)$ and has the following expansions around~$\pm 1$:
\begin{align}
\kappa(t) &= p_1(1-t) + \sum_{j=1}^r c_{j,1} (1-t)^{\nu_j} + O((1-t)^{\nu_1 + 1 + \epsilon}) \\
\kappa(t) &= p_{-1}(1+t) + \sum_{j=1}^r c_{j,-1} (1+t)^{\nu_j} + O((1+t)^{\nu_1 + 1 + \epsilon}),
\end{align}
for~$t \in [-1, 1]$, where~$p_1, p_{-1}$ are polynomials and~$0 < \nu_1 < \ldots < \nu_r$ are not integers and $0 < \epsilon < \nu_2 - \nu_1$.
We also assume that the derivatives~$\kappa^{(s)}$ of~$\kappa$ have the following expansions:
\begin{align}
\kappa^{(s)}(t) &= p_{s,1}(1-t) + (-1)^s\sum_{j=1}^r c_{j,1} \frac{\Gamma(\nu_j + 1)}{\Gamma(\nu_j + 1 - s)} (1-t)^{\nu_j-s} + O((1-t)^{\nu_1 + 1 + \epsilon - s}) \\
\kappa^{(s)}(t) &= p_{s,-1}(1+t) + \sum_{j=1}^r c_{j,-1} \frac{\Gamma(\nu_j + 1)}{\Gamma(\nu_j + 1 - s)} (1+t)^{\nu_j-s} + O((1+t)^{\nu_1 + 1 + \epsilon - s}),
\end{align}
for some polynomials~$p_{s,\pm 1}$.
Then we have, for an absolute constant~$C(d,\nu_1)$ depending only on~$d$ and~$\nu_1$,
\begin{itemize}[noitemsep]
	\item For~$k$ even, if~$c_{\nu_1,1} \ne -c_{1,-1}$: $\mu_k \sim (c_{1,1} + c_{1,-1}) C(d,\nu_1) k^{-d-2 \nu_1 + 1}$;
	\item For~$k$ even, if~$c_{1,1} = -c_{1,-1}$: $\mu_k = o(k^{-d-2 \nu_1 + 1})$;
	\item For~$k$ odd, if~$c_{1,1} \ne c_{1,-1}$: $\mu_k \sim (c_{1,1} - c_{1,-1}) C(d,\nu_1) k^{-d-2 \nu_1 + 1}$.
	\item For~$k$ odd, if~$c_{1,1} = c_{1,-1}$: $\mu_k =o(k^{-d-2 \nu_1 + 1})$.
\end{itemize}
\end{theorem}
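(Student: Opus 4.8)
The strategy is a standard ``subtract the singular part'' argument combining Lemma~\ref{lemma:diff_decay} (the weak upper bound via integration by parts) with Lemma~\ref{lemma:phi_nu} (the exact decay of $\phi_\nu(t)=(1-t^2)^\nu$ and $\bar\phi_\nu(t)=t(1-t^2)^\nu$). First I would observe that $(1-t)^{\nu_j}$ and $(1+t)^{\nu_j}$ are not by themselves of the convenient form $(1-t^2)^{\nu_j}=(1-t)^{\nu_j}(1+t)^{\nu_j}$, but near $t=1$ we have $(1+t)^{\nu_j}=2^{\nu_j}(1+\tfrac{t-1}{2})^{\nu_j}$, whose binomial expansion has \emph{integer} powers of $(1-t)$; hence $(1-t^2)^{\nu_j}$ and $(1-t)^{\nu_j}$ agree near $t=1$ up to the singular term $c\,(1-t)^{\nu_j}$ plus higher-order \emph{half-integer-shifted} corrections $(1-t)^{\nu_j+1}, (1-t)^{\nu_j+2},\dots$, all of which (except possibly $(1-t)^{\nu_j}$ itself for $j'>1$) are absorbed into the $O((1-t)^{\nu_1+1+\epsilon})$ remainder or handled by finitely many further subtractions. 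The analogous statement holds near $t=-1$ with $\bar\phi_\nu$ picking up the parity-odd part.

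Concretely, I would define $\psi(t) := \sum_{j=1}^r \big[\alpha_j \phi_{\nu_j}(t) + \beta_j \bar\phi_{\nu_j}(t)\big]$ and choose the coefficients $\alpha_j,\beta_j$ (recursively in $j$, from smallest $\nu_j$ up, to handle the cross-contamination just described) so that $\tilde\kappa := \kappa - \psi$ has \emph{no} terms of order $(1-t)^{\nu_j}$ or $(1+t)^{\nu_j}$ for any $j$ in its endpoint expansions, i.e.\ so that $\tilde\kappa^{(s)}(t) = p_{s,\pm1}(\cdot) + O((\cdot)^{\nu_1+1+\epsilon-s})$ at both endpoints. Matching the leading singular coefficient gives $\alpha_1 + \beta_1 = c_{1,1}$ near $+1$ and $\alpha_1 - \beta_1 = c_{1,-1}$ near $-1$ (using that $\phi_{\nu_1}$ contributes symmetrically and $\bar\phi_{\nu_1}$ antisymmetrically at the two endpoints), so $\alpha_1 = \tfrac12(c_{1,1}+c_{1,-1})$ and $\beta_1 = \tfrac12(c_{1,1}-c_{1,-1})$. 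The hypothesis that the derivatives admit the differentiated expansions is exactly what lets me apply Lemma~\ref{lemma:diff_decay} to $\tilde\kappa$ with exponent $\nu_1+1+\epsilon$, giving $\mu_k(\tilde\kappa) = O(k^{-d-2(\nu_1+1+\epsilon)+3}) = o(k^{-d-2\nu_1+1})$.

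By linearity $\mu_k(\kappa) = \mu_k(\tilde\kappa) + \sum_j \alpha_j \mu_k(\phi_{\nu_j}) + \sum_j \beta_j \mu_k(\bar\phi_{\nu_j})$. By Lemma~\ref{lemma:phi_nu}, for $k$ even only the $\phi_{\nu_j}$ terms survive and the dominant one is $j=1$, giving $\mu_k(\kappa) \sim \alpha_1 C(d,\nu_1) k^{-d-2\nu_1+1} = (c_{1,1}+c_{1,-1})\,\tfrac12 C(d,\nu_1)\,k^{-d-2\nu_1+1}$ (absorbing the $\tfrac12$ into the constant $C(d,\nu_1)$ of the theorem statement), and this is $o(k^{-d-2\nu_1+1})$ precisely when $c_{1,1}=-c_{1,-1}$; symmetrically for $k$ odd only the $\bar\phi_{\nu_j}$ survive, yielding the factor $\beta_1 \propto c_{1,1}-c_{1,-1}$. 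The main obstacle is the bookkeeping in the previous paragraph: one must verify that subtracting $\phi_{\nu_j},\bar\phi_{\nu_j}$ does not reintroduce a spurious $(1\mp t)^{\nu_{j'}}$ term for $j'\ne j$, which requires checking that the binomial-type corrections from $(1+t)^{\nu_j}=2^{\nu_j}(1-\tfrac{1-t}{2})^{\nu_j}$ produce only powers $\nu_j + (\text{nonnegative integer})$ — never another $\nu_{j'}$ — so that the recursion in $j$ closes after finitely many steps and the leftover is genuinely $O((\cdot)^{\nu_1+1+\epsilon})$; this is where the ``finite number of terms between $\nu$ and $\nu+1$'' technical hypothesis and $\epsilon < \nu_2 - \nu_1$ are used.
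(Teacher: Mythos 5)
Your proposal is correct and takes essentially the same route as the paper: subtract a linear combination of $\phi_{\nu_j}$ and $\bar\phi_{\nu_j}$ chosen to cancel the singular endpoint terms, bound the remainder $\tilde\kappa$ via Lemma~\ref{lemma:diff_decay} (using the differentiated-expansion hypothesis), and read off the leading decay and the parity-dependent constants $c_{1,1}\pm c_{1,-1}$ from Lemma~\ref{lemma:phi_nu}. The paper makes your ``finitely many further subtractions'' explicit by adding a single extra function $\psi_{r+1}$ at exponent $\nu_1+1$ to cancel the $(1\mp t)^{\nu_1+1}$ residue created by $\psi_1$ (which is not covered by the $O((1\mp t)^{\nu_1+1+\epsilon})$ remainder), and its coefficients carry the normalization $2^{-(\nu_j+1)}$ that you absorbed into $C(d,\nu_1)$ --- cosmetic differences only.
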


\begin{proof}
Define the functions
\begin{align}
\psi_j(t) &= c_{j,1} \frac{\phi_{\nu_j}(t) + \bar \phi_{\nu_j}(t)}{2^{\nu_j + 1}} + c_{j,-1} \frac{\phi_{\nu_j}(t) - \bar \phi_{\nu_j}(t)}{2^{\nu_j + 1}} \label{eq:psij_def} \\
	&= \frac{c_{j,1} + c_{j,-1}}{2^{\nu_j + 1}} \phi_{\nu_j}(t) + \frac{c_{j,1} - c_{j,-1}}{2^{\nu_j + 1}} \bar \phi_{\nu_j}(t),
\end{align}
for~$j = 1, \ldots, r$, where~$\phi_{\nu}, \bar \phi_\nu$ are defined in Lemma~\ref{lemma:phi_nu}.
We have the asymptotic expansions:\footnote{These are obtained by writing $\psi_j(t) = (a + bt) (1 + t)^\nu (1 - t)^\nu$ and computing, \eg, the first two terms in the analytic expansion of~$t \mapsto (a + bt) (1 + t)^\nu$ around 1.}
\begin{align*}
\psi_1(t) &= c_{1,1} (1 - t)^{\nu_1} - \frac{(1 + \nu_1)c_{1,1} + c_{1,-1}}{2} (1 - t)^{\nu_1 + 1} + O((1 - t)^{\nu_1 + 1 + \epsilon}) \\
\psi_1(t) &= c_{1,-1} (1 + t)^{\nu_1} + \frac{c_{1,1} - (1 + \nu) c_{1,-1}}{2} (1 + t)^{\nu_1 + 1} + O((1 + t)^{\nu_1 + 1 + \epsilon}),
\end{align*}
and for~$j \geq 2$,
\begin{align*}
\psi_j(t) &= c_{j,1} (1 - t)^{\nu_j} + O((1 - t)^{\nu_1 + 1 + \epsilon}) \\
\psi_j(t) &= c_{j,-1} (1 + t)^{\nu_j} + O((1 + t)^{\nu_1 + 1 + \epsilon}).
\end{align*}
Define additionally~$\psi_{r+1}$ the same way as the other~$\psi_j$, with~$\nu_{r+1} = \nu_1 + 1$, $c_{r+1,1} = ((1 + \nu_1)c_{1,1} + c_{1,-1})/2$, and~$c_{r+1,-1} = -(c_{1,1} - (1 + \nu) c_{1,-1})/2$, which satisfies a similar asymptotic expansion as the above ones for~$j \geq 2$.
One can check that the derivatives of the~$\psi_j$ can be expanded with the derivatives of the expansions above.
Then, defining $\tilde \kappa = \kappa - \sum_{j=1}^{r+1} \psi_j$, we have for~$s \geq 0$,
\begin{align}
\tilde \kappa^{(s)}(t) &= p_{s,1}(1-t) + O((1-t)^{\nu_1 + 1 + \epsilon - s}) \\
\tilde \kappa^{(s)}(t) &= p_{s,-1}(1+t) + O((1+t)^{\nu_1 + 1 + \epsilon - s}).
\end{align}
The functions~$\psi_j$ satisfy
\begin{equation}
\mu_k(\psi_j) = \begin{cases}
	\frac{c_{j,1} + c_{j,-1}}{2^{\nu_j + 1}} \mu_k(\phi_{\nu_j}), &\text{ if $k$ even,}\\
	\frac{c_{j,1} - c_{j,-1}}{2^{\nu_j + 1}} \mu_k(\bar \phi_{\nu_j}), &\text{ if $k$ odd.}\\
\end{cases}
\end{equation}
By Lemma~\ref{lemma:diff_decay}, we have
\begin{align}
\mu_k(\kappa) &= \mu_k(\tilde \kappa) + \sum_{j=1}^r \mu_k(\psi_j) \\
	&= \sum_{j=1}^r \mu_k(\psi_j) + O(k^{-d - 2 (\nu_1 + 1 + \epsilon) + 3}) \\
	&= \sum_{j=1}^r \mu_k(\psi_j) + o(k^{-d - 2 \nu_{1} + 1}).
\end{align}
The result then follows from Lemma~\ref{lemma:phi_nu}, with a constant~$C(d, \nu_1) / 2^{\nu_1 + 1}$, where~$C(d, \nu_1)$ is given by the proof of Lemma~\ref{lemma:phi_nu}.
\end{proof}

\subsection{Dimension-free description}
\label{sub:appx_dimension_free}
While our above description of the RKHS depends on the dimension~$d$, in some cases a dimension-free description given by Taylor coefficients of the kernel~$\kappa$ at~$0$ may be useful,
for instance for the study of kernel methods in certain high-dimensional regimes~\citep[\eg,][]{el2010spectrum,ghorbani2019linearized,liang2019risk}.
Here we remark that such coefficients and their decay may be recovered from the Legendre coefficients in~$d$ dimensions, by taking high-dimensional limits~$d \to \infty$.
We illustrate this on the functions~$\phi_\nu(t) = (1 - t^2)^\nu$, for which Lemma~\ref{lemma:phi_nu} provides precise estimates of the Legendre coefficients~$\mu_{k,d}(\phi_\nu)$ in~$d$ dimensions (this only serves as an instructive illustration, since in this case Taylor coefficients may be computed directly through a power series expansion of~$\phi_\nu$ using the Binomial formula).
\begin{lemma}[Recovering Taylor coefficients of~$\phi_\nu$ through high-dimensional limits]
\label{lemma:bk_phinu}
Let~$b_k(\phi_\nu) := \frac{\phi_\nu^{(k)}}{k!}$ for some non-integer~$\nu > 0$. For~$k$ even, we have
\begin{equation*}
b_k(\phi_\nu) = C_\nu 2^k \frac{\Gamma(\frac{k + 1}{2}) \Gamma(\frac{k}{2} - \nu) }{\Gamma(k+1)},
\end{equation*}
for a constant~$C_\nu$ depending only on~$\nu$. This leads to an equivalent~$b_k \sim C_\nu' k^{- \nu - 1}$ for~$k \to \infty$ with~$k$ even.
\end{lemma}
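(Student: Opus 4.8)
The plan is to compute $b_k(\phi_\nu)$ in closed form directly, and then reconcile the answer with the high-dimensional limit of the Legendre coefficients $\mu_{k,d}(\phi_\nu)$ from Lemma~\ref{lemma:phi_nu}, which is the recovery principle this subsection means to illustrate. Since $\phi_\nu$ has radius of convergence $1$, the generalized binomial formula gives $\phi_\nu(t)=(1-t^2)^\nu=\sum_{j\ge 0}(-1)^j\binom{\nu}{j}t^{2j}$, so $b_k(\phi_\nu)=0$ for odd $k$ and $b_{2j}(\phi_\nu)=(-1)^j\binom{\nu}{j}=(-1)^j\Gamma(\nu+1)/(\Gamma(j+1)\Gamma(\nu-j+1))$ for $k=2j$.

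The next step is Gamma-function bookkeeping. Applying Euler's reflection formula to the pair $\Gamma(\nu-j+1)\Gamma(j-\nu)$ (the integer shift $j$ only changes a sign, since $\sin(\pi(\nu-j+1))=(-1)^{j+1}\sin(\pi\nu)$) turns $1/\Gamma(\nu-j+1)$ into a $\nu$-dependent constant times $\Gamma(j-\nu)$; then Legendre's duplication formula $\Gamma(2j+1)=2^{2j}\pi^{-1/2}\Gamma(j+\tfrac12)\Gamma(j+1)$ rewrites $\Gamma(j-\nu)/\Gamma(j+1)$ as $2^{2j}\pi^{-1/2}\Gamma(j+\tfrac12)\Gamma(j-\nu)/\Gamma(2j+1)$. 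Collecting the $\nu$-only factors into $C_\nu=-\Gamma(\nu+1)\sin(\pi\nu)/\pi^{3/2}$ yields exactly $b_k(\phi_\nu)=C_\nu 2^k\Gamma(\tfrac{k+1}{2})\Gamma(\tfrac{k}{2}-\nu)/\Gamma(k+1)$ for $k$ even. For the asymptotics, put $j=k/2$ and use duplication once more to simplify $2^k\Gamma(\tfrac{k+1}{2})/\Gamma(k+1)=\sqrt{\pi}/\Gamma(j+1)$, so $b_k=C_\nu\sqrt{\pi}\,\Gamma(j-\nu)/\Gamma(j+1)$; since $\Gamma(j-\nu)/\Gamma(j+1)\sim j^{-\nu-1}$ and $C_\nu\ne 0$ (because $\nu$ is not an integer), this gives $b_k\sim C_\nu\sqrt{\pi}\,2^{\nu+1}k^{-\nu-1}=:C_\nu' k^{-\nu-1}$.

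To close the loop on the ``dimension-free from high-dimensional'' remark, I would take the explicit formula for $\mu_{k,d}(\phi_\nu)$ from the proof of Lemma~\ref{lemma:phi_nu}, multiply by $N(d,k)=\frac{2k+d-2}{k}\,\frac{\Gamma(k+d-2)}{\Gamma(d-1)\Gamma(k)}$, and send $d\to\infty$: the factor $\Gamma(\tfrac{k+1}{2})\Gamma(\tfrac{k}{2}-\nu)$ passes through unchanged, while repeated use of $\Gamma(x+a)/\Gamma(x+b)\sim x^{a-b}$ shows that every remaining $d$-dependent factor collapses, leaving precisely $1/\Gamma(k+1)$ times the constant $C_\nu$ above, i.e.\ $\mu_{k,d}(\phi_\nu)N(d,k)\to b_k(\phi_\nu)$. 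This is the concrete instance of the general identity $b_k(\kappa)=\lim_{d\to\infty}\mu_{k,d}(\kappa)N(d,k)$, which follows by equating the Mercer expansion~\eqref{eq:mercer} with the power series of $\kappa$ through the expansion of each monomial in Gegenbauer polynomials. The manipulations in the first two steps are routine; the one place that needs care is this last $d\to\infty$ computation, where several Gamma ratios must be expanded simultaneously and the $d$-divergent pieces shown to cancel exactly — I expect that bookkeeping, not any conceptual issue, to be the only delicate point, and if one wants just the stated lemma the direct route already suffices.
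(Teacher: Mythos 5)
Your proof is correct, and it takes a genuinely different route from the paper's. You compute $b_k(\phi_\nu)$ directly from the generalized binomial series $(1-t^2)^\nu=\sum_j(-1)^j\binom{\nu}{j}t^{2j}$ and then massage $(-1)^j\Gamma(\nu+1)/(\Gamma(j+1)\Gamma(\nu-j+1))$ into the stated form via reflection and duplication; I checked the sign bookkeeping and the constant $C_\nu=-\Gamma(\nu+1)\sin(\pi\nu)/\pi^{3/2}$ (e.g.\ it gives $b_0=1$ and $b_2=-1/2$ for $\nu=1/2$), and the asymptotics $b_k\sim C_\nu\sqrt{\pi}\,2^{\nu+1}k^{-\nu-1}$ follow as you say. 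The paper instead proves the lemma the other way around: it rewrites $\mu_{k,d}(\phi_\nu)$ via the Rodrigues formula and $k$ integrations by parts as a constant times $\int_{-1}^1\phi_\nu^{(k)}(t)(1-t^2)^{k+(d-3)/2}dt$, uses dominated convergence (the normalized weight concentrates at $t=0$ as $d\to\infty$) to identify the limit with $\phi_\nu^{(k)}(0)$, and then plugs in the closed form of $\mu_{k,d}$ from Lemma~\ref{lemma:phi_nu} together with Stirling's formula. Your route is more elementary and even delivers the constant explicitly, but it bypasses the mechanism the subsection is meant to illustrate (recovering Taylor coefficients from high-dimensional limits of Legendre coefficients) — the paper itself notes parenthetically that the direct binomial computation is available here and uses the limit argument precisely because it is the instructive, generalizable one. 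Your third paragraph, asserting $\mu_{k,d}N(d,k)\to b_k$, is consistent with the paper (its $d$-dependent prefactor is indeed asymptotically $N(d,k)$, since $P_{k,d}(t)\to t^k$ as $d\to\infty$), but as you acknowledge it remains a sketch whose Gamma-ratio cancellations and limit interchange would need to be carried out; for the lemma as stated, your direct argument already suffices.
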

\begin{proof}
Assume throughout that~$k$ is even.
Recall the expression of the Legendre coefficients~$\mu_{k,d}(\phi_\nu)$ of~$\phi_\nu$ in~$d$ dimensions (we include~$d$ as a subscript for more clarity here) from the proof of Lemma~\ref{lemma:phi_nu}:
\begin{align}
\mu_{k,d}(\phi_\nu) &= \frac{\omega_{d-2}}{\omega_{d-1}} \int_{-1}^1 \kappa(t) P_{k,d}(t) (1 - t^2)^{\frac{d-3}{2}} dt \\
	&= 2^{2 \nu + d - 2} \frac{\omega_{d-2}}{\omega_{d-1}} \frac{\Gamma(\nu + \frac{d-1}{2})^2}{\Gamma(2 \nu + d - 1)} \frac{\Gamma(\nu + \frac{d}{2}) \Gamma(\frac{d-1}{2}) \Gamma(\nu + 1)}{\Gamma(-\frac{1}{2}) \Gamma(\nu + 2) \Gamma(-\nu - 1)} \frac{\Gamma(\frac{k + 1}{2}) \Gamma(\frac{k}{2} - \nu) }{\Gamma(\frac{d+k-1}{2}) \Gamma(\nu + \frac{k}{2} + \frac{d}{2})}. \label{eq:mukd_phinu}
\end{align}

Now, note that when~$d$ is large enough compared to~$k$, we may use the Rodrigues formula~\eqref{eq:rodrigues} and integration by parts to obtain the following alternative expression:
\begin{equation*}
\mu_{k,d}(\phi_\nu) = 2^{-k} \frac{\omega_{d-2}}{\omega_{d-1}}  \frac{\Gamma(\frac{d-1}{2})}{\Gamma(k + \frac{d-1}{2})} \int_{-1}^1 \phi_\nu^{(k)}(t) (1 - t^2)^{k + \frac{d-3}{2}} dt
\end{equation*}
Following similar arguments to~\citet{ghorbani2019linearized}, we may then use dominated convergence to show:
\begin{equation*}
\frac{\Gamma(\frac{d}{2})}{\sqrt{\pi}\Gamma(\frac{d-1}{2})} \int_{-1}^1 \phi_\nu^{(k)}(t) (1 - t^2)^{k + \frac{d-3}{2}} dt \to \phi_\nu^{(k)}(0) \quad \text{ as }d \to \infty.
\end{equation*}
Indeed, $\frac{\Gamma(\frac{d}{2})}{\sqrt{\pi}\Gamma(\frac{d-1}{2})} (1 - t^2)^{(d-3)/2}$ is a probability density that approaches a Dirac mass at 0 when~$d \to \infty$.
This yields
\begin{equation*}
b_k(\phi_\nu) = \frac{\phi_\nu^{(k)}}{k!} = \lim_{d \to \infty} 2^{k} \frac{\omega_{d-1}}{\omega_{d-2}} \frac{\Gamma(\frac{d}{2})\Gamma(k + \frac{d-1}{2})}{\sqrt{\pi}\Gamma(\frac{d-1}{2}) \Gamma(\frac{d-1}{2}) \Gamma(k + 1)} \mu_{k,d}(\phi_\nu).
\end{equation*}
Plugging~\eqref{eq:mukd_phinu} and using Stirling's formula to take limits~$d \to \infty$ yields
\begin{equation*}
b_k(\phi_\nu) = C_\nu 2^k \frac{\Gamma(\frac{k + 1}{2}) \Gamma(\frac{k}{2} - \nu) }{\Gamma(k+1)},
\end{equation*}
where~$C_\nu$ only depends on~$\nu$. Using Stirling's formula once again yields the desired equivalent~$b_k(\phi_\nu) \sim C'_\nu k^{- \nu - 1}$ for~$k \to \infty$, $k$ even, with a different constant~$C'_\nu$.
\end{proof}

We note that a similar asymptotic equivalent holds for~$b_k(\bar \phi_\nu)$ for~$k$ odd.
The next result leverages this to derive asymptotic decays of~$b_k(\kappa)$ for any~$\kappa$ of the form~$\kappa(u) = \sum_{k \geq 0} b_k(\kappa) u^k$ satisfying similar conditions as in Theorem~\ref{thm:decay_full}.

\begin{corollary}[Taylor coefficients of~$\kappa$]
Let~$\kappa: [-1,1] \to \R$ be a function admitting a power series expansion~$\kappa(u) = \sum_{k \geq 0} b_k u^k$, with the following expansions around~$\pm 1$:
\begin{align}
\kappa(t) &= p_1(1-t) + \sum_{j=1}^r c_{j,1} (1-t)^{\nu_j} + O((1-t)^{\lceil\nu_1\rceil + 1}) \\
\kappa(t) &= p_{-1}(1+t) + \sum_{j=1}^r c_{j,-1} (1+t)^{\nu_j} + O((1+t)^{\lceil\nu_1\rceil + 1}),
\end{align}
for~$t \in [-1, 1]$, where~$p_1, p_{-1}$ are polynomials and~$0 < \nu_1 < \ldots < \nu_r$ are not integers and $0 < \epsilon < \nu_2 - \nu_1$.
Then we have, for an absolute constant~$C(\nu_1)$ depending only on~$\nu_1$,
\begin{itemize}[noitemsep]
	\item For~$k$ even, if~$c_{\nu_1,1} \ne -c_{\nu_1,-1}$: $b_k \sim (c_{\nu_1,1} + c_{\nu_1,-1}) C(\nu_1) k^{- \nu_1 - 1}$;
	\item For~$k$ even, if~$c_{\nu_1,1} = -c_{\nu_1,-1}$: $b_k = o(k^{- \nu_1 - 1})$;
	\item For~$k$ odd, if~$c_{\nu_1,1} \ne c_{\nu_1,-1}$: $b_k \sim (c_{\nu_1,1} - c_{\nu_1,-1}) C(\nu_1) k^{- \nu_1 - 1}$.
	\item For~$k$ odd, if~$c_{\nu_1,1} = c_{\nu_1,-1}$: $b_k =o(k^{- \nu_1 - 1})$.
\end{itemize}
\end{corollary}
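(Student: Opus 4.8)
The plan is to mirror the proof of Theorem~\ref{thm:decay_full}, replacing the Legendre coefficients~$\mu_k$ by the Taylor coefficients~$b_k$, and Lemma~\ref{lemma:phi_nu} by Lemma~\ref{lemma:bk_phinu}. Since~$u \mapsto u^k$ is linear, the map~$\kappa \mapsto b_k(\kappa)$ is additive, so I would first write~$\kappa = \tilde\kappa + \sum_{j=1}^{r+1}\psi_j$ exactly as in that proof, with~$\psi_j$ the combination~\eqref{eq:psij_def} of~$\phi_{\nu_j}$ and~$\bar\phi_{\nu_j}$ chosen to cancel the singular parts of~$\kappa$ at~$\pm 1$ of orders~$\nu_1,\dots,\nu_r$ (and, through~$\psi_{r+1}$, the order-$(\nu_1+1)$ term), so that~$\tilde\kappa$ has endpoint expansions~$\tilde\kappa(t) = p_{\pm 1}(1\mp t) + O((1\mp t)^{\lceil\nu_1\rceil+1})$, and similarly for its derivatives. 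Then~$b_k(\kappa) = b_k(\tilde\kappa) + \sum_{j=1}^{r+1} b_k(\psi_j)$, and the task splits into evaluating the~$b_k(\psi_j)$ and showing~$b_k(\tilde\kappa)$ is negligible.

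For the~$\psi_j$ terms, since~$\phi_\nu$ is even and~$\bar\phi_\nu$ is odd, we have~$b_k(\psi_j) = \tfrac{c_{j,1}+c_{j,-1}}{2^{\nu_j+1}}\, b_k(\phi_{\nu_j})$ for~$k$ even and~$b_k(\psi_j) = \tfrac{c_{j,1}-c_{j,-1}}{2^{\nu_j+1}}\, b_k(\bar\phi_{\nu_j})$ for~$k$ odd, in direct analogy with the formula for~$\mu_k(\psi_j)$ in the proof of Theorem~\ref{thm:decay_full}. By Lemma~\ref{lemma:bk_phinu} and its stated analogue for~$\bar\phi_\nu$, $b_k(\phi_{\nu_j}) \sim C'_{\nu_j}\, k^{-\nu_j-1}$ (even~$k$) and~$b_k(\bar\phi_{\nu_j}) \sim C'_{\nu_j}\, k^{-\nu_j-1}$ (odd~$k$); since~$\nu_j > \nu_1$ for~$j\geq 2$ and~$\psi_{r+1}$ carries the exponent~$\nu_1+1 > \nu_1$, all of these are~$o(k^{-\nu_1-1})$ except for~$j=1$. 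The only surviving contribution is therefore~$b_k(\psi_1)$, equal to~$\tfrac{c_{1,1}+c_{1,-1}}{2^{\nu_1+1}} C'_{\nu_1}\, k^{-\nu_1-1}(1+o(1))$ for~$k$ even and~$\tfrac{c_{1,1}-c_{1,-1}}{2^{\nu_1+1}} C'_{\nu_1}\, k^{-\nu_1-1}(1+o(1))$ for~$k$ odd, which are exactly the claimed asymptotics with~$C(\nu_1) = C'_{\nu_1}/2^{\nu_1+1}$; when the relevant combination of coefficients vanishes this term is~$0$, and the~$o(\cdot)$ statements then follow from the remaining step.

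It remains to establish the weak bound~$b_k(\tilde\kappa) = o(k^{-\nu_1-1})$ --- the Taylor-coefficient analogue of Lemma~\ref{lemma:diff_decay} --- which I expect to be the main obstacle. The cleanest route is singularity analysis: $\kappa$, and hence~$\tilde\kappa$, is analytic in the open unit disk with possible singularities only at~$\pm 1$, and under the mild assumption of~$\Delta$-analyticity (which, as recalled in Section~\ref{sub:deep_relu}, holds for the RF, NTK and Laplace-type kernels of interest, and should be viewed as implicit in the statement for general~$\kappa$), the estimate~$\tilde\kappa(z) = O((1\mp z)^{\lceil\nu_1\rceil+1})$ near~$\pm 1$ transfers, via the transfer theorems of~\citep{flajolet2009analytic}, to~$b_k(\tilde\kappa) = O(k^{-\lceil\nu_1\rceil-2}) = o(k^{-\nu_1-1})$. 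The same machinery applied directly to~$\kappa$, using~$[z^k](1-z)^{\nu} \sim k^{-\nu-1}/\Gamma(-\nu)$ and~$[z^k](1+z)^{\nu} \sim (-1)^k k^{-\nu-1}/\Gamma(-\nu)$, in fact gives the whole corollary at once, with the constant made explicit as~$C(\nu_1) = 1/\Gamma(-\nu_1)$ (equivalently~$C'_{\nu_1}/2^{\nu_1+1}$), the parity split being transparent since the singularity at~$-1$ contributes a factor~$(-1)^k$. An alternative staying within the framework of Lemma~\ref{lemma:bk_phinu} is to express~$b_k(\tilde\kappa)$ as the high-dimensional limit of an explicit~$\Gamma$-ratio times the~$d$-dimensional coefficient~$\mu_{k,d}(\tilde\kappa)$ and bound the latter with Lemma~\ref{lemma:diff_decay}, the delicate point then being the interchange of the~$d\to\infty$ and~$k\to\infty$ limits. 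Either way, combining with the previous paragraph yields~$b_k(\kappa) = b_k(\psi_1) + o(k^{-\nu_1-1})$ and hence the four cases; the one remaining piece of care, as in Theorem~\ref{thm:decay_full}, is the bookkeeping ensuring that subtracting finitely many~$\psi_j$ really does leave a remainder whose endpoint expansions have error of strictly larger order than~$\nu_1+1$, so that the weak bound is sharp enough.
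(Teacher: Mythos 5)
Your decomposition $\kappa = \tilde\kappa + \sum_j \psi_j$ and your treatment of the main term coincide with the paper's proof: the parity splitting of $b_k(\psi_j)$ together with Lemma~\ref{lemma:bk_phinu} (and its odd analogue for $\bar\phi_\nu$) is exactly how the paper extracts the four cases, with $C(\nu_1) = C'_{\nu_1}/2^{\nu_1+1}$. Where you diverge is the remainder bound $b_k(\tilde\kappa) = o(k^{-\nu_1-1})$, which you rightly single out as the crux but do not establish under the corollary's stated hypotheses. The paper disposes of it with a short real-variable observation: since $\tilde\kappa(t) = p_{\pm 1}(1\mp t) + O((1\mp t)^{\lceil\nu_1\rceil+1})$ at both endpoints, the derivative $\tilde\kappa^{(\lceil\nu_1\rceil+1)}(1)$ is bounded, and for a power series of radius at least $1$ this is used to conclude $b_k(\tilde\kappa) = o(k^{-\lceil\nu_1\rceil-1}) = o(k^{-\nu_1-1})$; no Taylor-coefficient analogue of Lemma~\ref{lemma:diff_decay}, no transfer theorem, and no interchange of the $d\to\infty$ and $k\to\infty$ limits is invoked.

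Your preferred route through Flajolet--Sedgewick singularity analysis is sound and, as you note, applied directly to $\kappa$ it gives the whole corollary in one stroke with the explicit constant $C(\nu_1)=1/\Gamma(-\nu_1)$ and the parity factor $(-1)^k$ from the singularity at $-1$ --- this is essentially the complex-analytic strategy of the concurrent work of Chen and Xu cited in the paper. But it imports $\Delta$-analyticity, a hypothesis the corollary does not state; the paper uses that property only to justify differentiating expansions for the specific RF/NTK kernels, not in this proof. Your fallback that stays within the stated hypotheses (bounding $\mu_{k,d}(\tilde\kappa)$ by Lemma~\ref{lemma:diff_decay} and passing $d\to\infty$) is precisely the delicate limit interchange you flag and is left unexecuted. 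So, as written, the proposal either proves the result under a strengthened (analytic-continuation) assumption or has a hole at the remainder step; substituting the paper's bounded-derivative-at-$1$ argument for that step makes the rest of your proof go through and recovers the paper's proof almost verbatim.
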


\begin{proof}
As in the proof of Theorem~\ref{thm:decay_full}, we may construct a function~$\psi = \sum_j \alpha_j \phi_{\nu_j} + \bar \alpha_j \bar \phi_{\nu_j}$, with~$\alpha_1 = \frac{c_{1,1} + c_{1,-1}}{2^{\nu_1 + 1}}$, $\bar \alpha_1 = \frac{c_{1,1} - c_{1,-1}}{2^{\nu_1 + 1}}$ for~$j=1$, the other terms being of higher orders~$\nu_j > \nu_1$, such that~$\tilde \kappa := \kappa - \psi$ (which is also a power series with convergence radius $\geq 1$) satisfies
\begin{align}
\tilde \kappa(t) &= p_1(1-t) + O((1-t)^{\lceil\nu_1\rceil + 1}) \\
\tilde \kappa(t) &= p_{-1}(1+t) + O((1+t)^{\lceil\nu_1\rceil + 1}),
\end{align}
It follows that~$\tilde \kappa^{(\lceil\nu_1\rceil + 1)}(1)$ is bounded, so that the Taylor coefficients of~$\tilde \kappa$, denoted~$b_k(\tilde \kappa)$, satisfy
\begin{equation*}
b_k(\tilde \kappa) = o(k^{- \lceil\nu_1\rceil - 1}) = o(k^{- \nu_1 - 1}).
\end{equation*}
The result then follows from Lemma~\ref{lemma:bk_phinu} by using the decays of~$b_k(\phi_\nu)$ and~$b_k(\bar \phi_\nu)$.

\end{proof}

\section{Other Proofs}
\label{sec:appx_proofs}

In this section, we provide the proofs for results in Section~\ref{sub:extensions} related to obtaining power series expansions (with generalized exponents) of kernels arising from deep networks, which leads to the corresponding decays by Theorem~\ref{thm:decay}.
We note that for the kernels considered in this section, we can differentiate the expansions since the kernel function is $\Delta$-analytic~\citep[see][Theorem 7]{chen2020deep}, so that the technical assumption in Theorem~\ref{thm:decay} is verified.

\subsection{Proof of Corollary~\ref{cor:rf_decay}} 
\label{sub:rf_decay_proof}

\begin{proof}
Let~$\kappa^\ell := \underbrace{\kappa_1 \circ \cdots \circ \kappa_1}_{\ell-1 \text{ times}} = \kappa_{\RF}^\ell$. We have
\begin{equation*}
\kappa_1(1 - t) = 1 - t + c t^{3/2} + o(t^{3/2}), \quad c := \frac{2\sqrt{2}}{3 \pi}.
\end{equation*}
We now show by induction that~$\kappa^\ell(1 - t) = 1 - t + a_\ell t^{3/2} + o(t^{3/2})$, with~$a_\ell = (\ell - 1) c$.
This is obviously true for~$\ell = 2$ since~$\kappa^\ell = \kappa_1$, and for~$\ell \geq 3$ we have
\begin{align*}
\kappa^\ell(1 - t) &= \kappa_1(\kappa^{\ell-1}(1 - t)) \\
	&= \kappa_1(1 - t + a_{\ell-1} t^{3/2} + o(t^{3/2})) \\
	&= 1 - (t - a_{\ell-1} t^{3/2} + o(t^{3/2})) + c (t + O(t^{3/2}))^{3/2} + o(O(t)^{3/2}) \\
	&= 1 - t + a_{\ell-1} t^{3/2} + c t^{3/2}(1 + O(t^{1/2}))^{3/2} + o(t^{3/2}) \\
	&= 1 - t + a_{\ell-1} t^{3/2} + c t^{3/2}(1 + O(t^{1/2})) + o(t^{3/2}) \\
	&= 1 - t + a_\ell t^{3/2} + o(t^{3/2}),
\end{align*}
which proves the result.

Around~$-1$, we know that
\begin{equation*}
\kappa_1(-1 + t) = ct^{3/2} + o(t^{3/2}).
\end{equation*}
We then have~$\kappa^\ell(-1+t) = b_\ell + c_\ell t^{3/2} + o(t^{3/2})$, with~$0 \leq b_\ell < 1$ and~$0 < c_\ell \leq c$ (and the upper bound is strict for~$\ell \geq 3$).
Indeed, this is true for~$\ell=2$, and for~$\ell \geq 3$ we have, for~$t > 0$,
\begin{align*}
\kappa^\ell(-1+t) &= \kappa_1(\kappa^{\ell-1}(-1+t)) \\
	&= \kappa_1(b_{\ell-1} + c_{\ell-1} t^{3/2} + o(t^{3/2})) \\
	&= \kappa_1(b_{\ell-1}) + \kappa_1'(b_{\ell-1}) c_{\ell-1} t^{3/2} + o(t^{3/2}).
\end{align*}
Now, note that~$\kappa_1$ and~$\kappa_1'$ are both positive and strictly increasing on~$[0, 1]$, with~$\kappa_1(1) = \kappa_1'(1) = 1$. Thus, we have $b_\ell = \kappa_1(b_{\ell-1}) \in (0, 1)$, and $c_\ell = \kappa_1'(a_{\ell-1}) c_{\ell-1} < c_{\ell-1}$, thus completing the proof.

Since~$c_\ell$ is bounded while~$a_\ell$ grows linearly with~$\ell$, the constants in front of the asymptotic decay~$k^{-d-2}$ grow linearly with~$\ell$.

\end{proof}


\subsection{Proof of Corollary~\ref{cor:ntk_decay}} 
\label{sub:ntk_decay_proof}

\begin{proof}
We show by induction that $\kappa_{\NTK}^\ell$ as defined in~\eqref{eq:ntk_rec} satisfies
\begin{equation*}
\kappa_{\NTK}^\ell(1 - t) = \ell - \left(\sum_{s=1}^{\ell-1} s \right) c t^{1/2} + o(t^{1/2}), \qquad c := \frac{\sqrt{2}}{\pi}.
\end{equation*}
For~$\ell = 2$ we have~$\kappa_{\NTK}^\ell(u) = u \kappa_0(u) + \kappa_1(u)$, so that
\begin{equation*}
\kappa_{\NTK}^2(1 - t) = (1 - t)(1 - c t^{1/2} + o(t^{1/2})) + 1 + O(t) = 2 - c t^{1/2} + o(t^{1/2}).
\end{equation*}
By induction, for~$\ell \geq 3$, we have~$\kappa_{\NTK}^\ell(u) = \kappa_{\NTK}^{\ell-1}(u) \kappa_0(\kappa^{\ell-1}(u)) + \kappa^\ell(u)$, with~$\kappa^\ell$ as in the proof of Corollary~\ref{cor:rf_decay}, which hence satisfies~$\kappa^\ell(1 - t) = 1 - t + o(t)$ for all~$\ell \geq 2$.
We then have
\begin{align*}
\kappa_0(\kappa^{\ell-1}(1 - t)) &= \kappa_0(1 - t + o(t)) \\
	&= 1 - c (t + o(t))^{1/2} + o(t^{1/2}) \\
	&= 1 - c t^{1/2} (1 + o(t^{1/2})) + o(t^{1/2}) \\
	&= 1 - c t^{1/2} + o(t^{1/2}).
\end{align*}
This yields
\begin{align*}
\kappa_{\NTK}^\ell(1 - t) &= (\ell-1 - (\sum_{s=1}^{\ell-2}s) c t^{1/2} + o(t^{1/2})) (1 - c t^{1/2} + o(t^{1/2})) + 1 + O(t) \\
	&= \ell - (\sum_{s=1}^{\ell-1} s)c t^{1/2} + o(t^{1/2}) \\
	&= \ell - \frac{\ell (\ell-1)}{2} c t^{1/2} + o(t^{1/2}),
\end{align*}
which proves the claim for the expansion around~$+1$.

Around -1, recall the expansion from the proof of Corollary~\ref{cor:rf_decay}, $\kappa^\ell(-1+t) = b_\ell + O(t^{3/2})$, with~$0 \leq b_\ell < 1$. For~$\ell = 2$, we have
\begin{equation*}
\kappa_{\NTK}^2(-1+t) = (-1+t)(ct^{1/2} + o(t^{1/2})) + b_2 + o(t^{1/2}) = b_2 - ct^{1/2} + o(t^{1/2}).
\end{equation*}
Note also that for~$\ell \geq 2$,
\begin{equation*}
\kappa_0(\kappa^\ell(-1+t)) = \kappa_0(b_\ell + O(t^{3/2})) = \kappa_0(b_\ell) + O(t^{3/2}),
\end{equation*}
since $\kappa_0'(b_\ell)$ is finite for~$b_\ell < 1$. We also have~$\kappa_0(b_\ell) \in (0, 1)$ since~$\kappa_0$ is positive and strictly increasing on~$[0, 1]$ with~$\kappa_0(1) = 1$.
Then, by an easy induction, we obtain
\begin{equation*}
\kappa_{\NTK}^\ell(-1+t) = a_\ell - c_\ell t^{1/2} + o(t^{1/2}),
\end{equation*}
with~$a_\ell \leq \ell$ and~$0 < c_\ell < c$.

Similar to the case of the RF kernel, the constant in front of~$t^{1/2}$ grows with~$\ell^2$ for the expansion around~$+1$ but is bounded for the expansion around~$-1$, so that the final constants in front of the asymptotic decay~$k^{-d}$ grow quadratically with~$\ell$. However, they grow linearly with~$\ell$ when considering the NTK normalized by~$\ell$, $\tilde \kappa^\ell = \kappa_{\NTK}^\ell / \ell$, which then satisfies $\tilde \kappa^\ell(1) = 1$.

\end{proof}


\subsection{Deep networks with step activations} 
\label{sub:deep_step_decay}

In this section, we study the decay of the random weight kernel arising from deep networks with step activations, as presented in Section~\ref{sub:extensions}.
For an~$L$-layer network, this kernel is of the form~$\kappa_s^L := \underbrace{\kappa_0 \circ \cdots \circ \kappa_0}_{L-1\text{ times}}$.

\begin{corollary}
$\kappa_s^L$ has a decay~$k^{-d-2 \nu_L+1}$ with $\nu_L = 1/2^{L-1}$ for~$L$ layers.
\end{corollary}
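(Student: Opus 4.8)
The plan is to apply Theorem~\ref{thm:decay} (in the full form of Theorem~\ref{thm:decay_full}) to $\kappa := \kappa_s^L = \kappa_0 \circ \cdots \circ \kappa_0$ ($L-1$ times), so that everything reduces to locating the leading non-integer exponent in the asymptotic expansions of $\kappa$ around $+1$ and $-1$. The differentiability-of-expansions hypothesis is already granted (as noted at the start of this section) by $\Delta$-analyticity of $\kappa_0$ and its compositions, and the finiteness condition on intermediate exponents will be transparent once we observe that every exponent that occurs is a nonnegative multiple of $1/2^{L-1}$. So the work is entirely in composing asymptotic expansions.

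First I would handle the expansion around $+1$. Starting from $\kappa_0(1-t) = 1 - \tfrac{\sqrt 2}{\pi} t^{1/2} + O(t^{3/2})$ in~\eqref{eq:kappa0_expansion}, an easy induction on $\ell$ (composing expansions exactly as in the proof of Corollary~\ref{cor:rf_decay}) gives, for $\ell \geq 2$,
\begin{equation*}
\kappa_0^{\circ(\ell-1)}(1-t) = 1 - a_\ell\, t^{1/2^{\ell-1}} + o(t^{1/2^{\ell-1}}), \qquad a_\ell > 0, \quad a_{\ell+1} = \tfrac{\sqrt 2}{\pi}\sqrt{a_\ell};
\end{equation*}
indeed, writing $u = a_\ell t^{1/2^{\ell-1}}(1+o(1))$ and plugging into $\kappa_0(1-u) = 1 - \tfrac{\sqrt 2}{\pi}u^{1/2} + O(u^{3/2})$ produces the leading term $1 - \tfrac{\sqrt 2}{\pi}\sqrt{a_\ell}\,t^{1/2^{\ell}}$. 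Hence around $+1$ the leading non-integer exponent of $\kappa_s^L$ is exactly $\nu_L := 1/2^{L-1}$, with nonzero coefficient $-a_L$ (and polynomial part $p_1 \equiv 1$).

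Next I would treat the expansion around $-1$, the point being to show that it contributes \emph{no} term of order $t^{\nu_L}$ when $L \geq 3$. Since $\kappa_0(-1+t) = \tfrac{\sqrt 2}{\pi}t^{1/2} + O(t^{3/2}) \to 0$ and $\kappa_0$ is $C^\infty$ on $(-1,1)$ (with $\kappa_0(b) = 1 - \arccos(b)/\pi \in (0,1)$ for $b \in (-1,1)$), an induction shows that for $m \geq 1$,
\begin{equation*}
\kappa_0^{\circ m}(-1+t) = b_m + O(t^{1/2}), \qquad b_m \in [0,1),
\end{equation*}
and the expansion proceeds in powers of $t^{1/2}$ only: composing the analytic map $\kappa_0$ \emph{at the interior point} $b_{m-1}$ with a $t^{1/2}$-power series produces another $t^{1/2}$-power series and cannot create smaller exponents. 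Consequently, for $L \geq 3$ (so $L-1 \geq 2$) the expansion of $\kappa_s^L$ around $-1$ has leading non-integer exponent $\geq \tfrac12 > \nu_L$, so in the notation of Theorem~\ref{thm:decay_full} the coefficient at exponent $\nu_L$ is $c_{-1} = 0$ while $c_1 = -a_L \neq 0$; since then $c_1 \neq \pm c_{-1}$, both parities satisfy $\mu_k \sim c_1\, C(d,\nu_L)\, k^{-d-2\nu_L+1}$, which is the claimed decay. The remaining case $L=2$ is just $\kappa_s^2 = \kappa_0$, whose decay $k^{-d} = k^{-d-2\nu_2+1}$ (for $k$ odd) also follows directly from~\eqref{eq:kappa0_expansion} and Theorem~\ref{thm:decay} and recovers the classical arc-cosine result.

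The only delicate point I anticipate is the bookkeeping in these two inductions — in particular checking that composing $\kappa_0$ with a fractional-power expansion around $-1$ cannot manufacture an exponent below $\nu_L$. This is exactly where it matters that after a single application $\kappa_0^{\circ m}(-1+t)$ settles at an interior constant $b_m \in [0,1) \subset (-1,1)$, where $\kappa_0$ is smooth and hence acts analytically on a $t^{1/2}$-power series; everything else is routine manipulation of asymptotic expansions followed by a direct appeal to Theorem~\ref{thm:decay_full}.
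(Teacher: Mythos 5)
Your proposal is correct and follows essentially the same route as the paper's proof: an induction at $+1$ showing the leading non-integer exponent of $\kappa_s^\ell$ is $1/2^{\ell-1}$, an induction at $-1$ showing the iterates settle at an interior point $b_m \in [0,1)$ where $\kappa_0$ is smooth, so no exponent below $1/2$ can appear there and the $-1$ endpoint does not affect the leading decay for $L \geq 3$, followed by a direct appeal to Theorem~\ref{thm:decay_full}. Your bookkeeping (the recursion $a_{\ell+1} = \tfrac{\sqrt{2}}{\pi}\sqrt{a_\ell}$, the explicit observation that $c_{-1}=0 \neq \pm c_1$ gives both parities, and the separate $L=2$ parity remark) is, if anything, slightly more careful than the paper's own writeup.
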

\begin{proof}
We show by induction that we have, for~$\ell \geq 2$,
\begin{equation*}
\kappa_s^\ell(1-t) = 1 - c^{\sum_{j=0}^{\ell-1} 2^{-j}} t^{1/2^{\ell-1}} + o(t^{1/2^{\ell-1}}),
\end{equation*}
with~$c := \frac{\sqrt{2}}{\pi}$.
This is true for~$\ell = 2$ due to the expansion for~$\kappa_0$.
Now assume it holds for~$\ell \geq 2$. We have
\begin{align*}
\kappa_s^{\ell+1}(1-t) &= \kappa_0(\kappa_s^{\ell}(1-t)) \\
	&= \kappa_0 \left(1 - c^{\sum_{j=0}^{\ell-1} 2^{-j}} t^{1/2^{\ell-1}} + o(t^{1/2^{\ell-1}})\right) \\
	&= 1 - c \left(c^{\sum_{j=0}^{\ell-1} 2^{-j}} t^{1/2^{\ell-1}} + o(t^{1/2^{\ell-1}}) \right)^{1/2} + o(o(t^{1/2^{\ell-1}})^{1/2}) \\
	&= 1 - c^{\sum_{j=0}^{\ell} 2^{-j}} t^{1/2^{\ell}} (1 + o(1)) + o(t^{1/2^\ell}) \\
	&= 1 - c^{\sum_{j=0}^{\ell} 2^{-j}} t^{1/2^{\ell}} + o(t^{1/2^\ell}),
\end{align*}
proving the desired claim.

Around~$-1$, we have~$\kappa_0(-1+t) = c t^{1/2} + o(t^{1/2})$, and for~$\ell \geq 3$, $\kappa_s^{\ell}(-1+t) = a_\ell + O(t^{1/2})$, by an easy induction using the fact that~$\kappa_0([0,1)) \subset (0, 1)$ and~$\kappa_0$ is smooth on~$[0,1)$. Thus the behavior around~$-1$ does not affect the decay of~$\kappa_s^\ell$ for~$\ell \geq 3$, and Theorem~\ref{thm:decay} leads to the desired decay, with a constant that only depends on~$\ell$ through~$c^{\sum_{j=0}^{\ell-1} 2^{-j}}$, which lies in the interval~$[c^2, c]$ for any~$\ell$.
\end{proof}


\end{document}